\newcommand{\pgftextcircled}[1]{
    \setbox0=\hbox{#1}%
    \dimen0\wd0%
    \divide\dimen0 by 2%
    \begin{tikzpicture}[baseline=(a.base)]%
        \useasboundingbox (-\the\dimen0,0pt) rectangle (\the\dimen0,1pt);
        \node[circle,draw,outer sep=0pt,inner sep=0.1ex] (a) {#1};
    \end{tikzpicture}
}
\newcommand{\pgftextcircledblk}[1]{
    \setbox0=\hbox{#1}%
    \dimen0\wd0%
    \divide\dimen0 by 2%
    \begin{tikzpicture}[baseline=(a.base)]%
        \useasboundingbox (-\the\dimen0,0pt) rectangle (\the\dimen0,1pt);
        \node[circle,draw,outer sep=0pt,inner sep=0.1ex,fill=blue] (a) {#1};
    \end{tikzpicture}
}
\newcommand{\red}[1]{
       \textcolor{red}{#1}
}
\def\sign{{\rm sign}}
\def\supp{{\rm supp}}
\def\t0{{t_0}}
\def\E{{\mathbb E}}
\def\R{{\mathbb R}}        % reals
\def\Rn{{\R^{n}}}          % product of n copies of reals
\def\t0{{t_0}}
\def\vs{{\mathbf s}}
\def\vY{{\mathbf Y}}
\def\vX{{\mathbf X}}
\def\vE{{\mathbf E}}
\def\vH{{\mathbf H}}
\def\vI{{\mathbf I}}
\def\vN{{\mathbf N}}
\def\Proj{{\rm Proj}}
\DeclareMathOperator*{\Min}{minimize}
      \newtheorem{prop}{Proposition}
   \newtheorem{remark}{Remark}
\begin{document}
\title{Exploring Outliers in Crowdsourced Ranking for QoE}

\copyrightyear{2017}
\acmYear{2017}
\setcopyright{acmcopyright}
\acmConference{MM '17}{October 23--27, 2017}{Mountain View, CA, USA}\acmPrice{15.00}\acmDOI{10.1145/3123266.3123267}
\acmISBN{978-1-4503-4906-2/17/10}
%Authors, replace the red X's with your assigned DOI string. See pdf attached to ACM rightsreview confirmation email.

\fancyhead{}
\settopmatter{printacmref=false, printfolios=false}

\author{Qianqian Xu$^{1}$, Ming Yan$^{2}$, Chendi Huang$^{3}$, \\Jiechao Xiong$^{3,4}$, Qingming Huang$^{5,6,7}$, Yuan Yao$^{8}$$^\ast$}
\thanks{$^\ast$Corresponding author.}
\affiliation{%
  \institution{$^1$ State Key Laboratory of Information Security (SKLOIS), Institute of Information Engineering, CAS, Beijing 100093, China}
  \institution{$^2$ Department of Computational Mathematics, Michigan State University, East Lansing, MI 48824, USA }
  \institution{$^3$ BICMR-LMAM-LMEQF-LMP, School of Mathematical Sciences, Peking University, Beijing 100871, China}
   \institution{$^4$ Tencent AI Lab, Shenzhen 518057, China}
   \institution{$^5$ University of Chinese Academy of Sciences, Beijing, 100049, China}
  \institution{$^6$ Key Lab of Intell. Info. Process., Inst. of Comput. Tech., CAS, Beijing, 100190, China}
  \institution{$^7$ Key Lab of Big Data Mining and Knowledge Management, CAS, Beijing, 100190, China}
  \institution{$^8$ Department of Mathematics, Hong Kong University of Science and Technology, Hong Kong 100871}
}
\email{xuqianqian@iie.ac.cn,yanm@math.msu.edu,cdhuang@pku.edu.cn}
\email{jcxiong@tencent.com, qmhuang@ucas.ac.cn,yuany@ust.hk}

%%\titlenote{Produces the permission block, and
%%  copyright information}
%%\subtitle{Extended Abstract}
%%\subtitlenote{The full version of the author's guide is available as
%%  \texttt{acmart.pdf} document}

\begin{abstract}

 Outlier detection is a crucial part of robust evaluation for crowdsourceable assessment of Quality of Experience (QoE) and has attracted much attention in recent years. In this paper, we propose some simple and fast algorithms for outlier detection and robust QoE evaluation based on the nonconvex optimization principle. Several iterative procedures are designed with or without knowing the number of outliers in samples. Theoretical analysis is given to show that such procedures can reach statistically good estimates under mild conditions. Finally, experimental results with simulated and real-world crowdsourcing datasets show that the proposed algorithms could produce similar performance to Huber-LASSO approach in robust ranking, yet with nearly 8 or 90 times speed-up, without or with \emph{a prior} knowledge on the sparsity size of outliers, respectively. Therefore the proposed methodology provides us a set of helpful tools for robust QoE evaluation with crowdsourcing data.
\end{abstract}

\begin{CCSXML}
<ccs2012>
<concept>
<concept_id>10002951.10003227.10003351.10003218</concept_id>
<concept_desc>Information systems~Data cleaning</concept_desc>
<concept_significance>500</concept_significance>
</concept>
<concept>
<concept_id>10002951.10003317.10003338.10003339</concept_id>
<concept_desc>Information systems~Rank aggregation</concept_desc>
<concept_significance>500</concept_significance>
</concept>
</ccs2012>
\end{CCSXML}

\ccsdesc[500]{Information systems~Data cleaning}
\ccsdesc[500]{Information systems~Rank aggregation}

%
% End generated code
%

%
%  Use this command to print the description
%

% We no longer use \terms command
%\terms{Theory}

\keywords{HodgeRank; Outlier Detection; $l_0$-regularization; Iterative Hard Thresholding; Iterative Least Trimmed Squares; Adaptive Algorithms}

\maketitle

\section{Introduction}
\label{Intro}
In recent years, the Quality of Experience (QoE)~\cite{Hossfeld12-QoE,Wu13crowd} has become a major research theme within the multimedia community. QoE measures  a user's subjective expectation, feeling, perception, and satisfaction with respect to multimedia content. Measuring and ensuring good QoE of multimedia content is highly subjective in nature.

A variety of approaches can be employed to conduct subjective tests, among which Mean Opinion Score (MOS)~\cite{MOS} and paired comparison are the two most popular ones. In the MOS test, individuals are asked to specify a rating from ``Bad" to ``Excellent" (e.g., Bad-1, Poor-2, Fair-3, Good-4, and Excellent-5) to grade the quality of a stimulus; while in paired comparison approach, raters are only asked to make intuitive comparative judgments instead of mapping their perception on a categorical or numerical scale. Among these there may be tradeoffs in the amount of information the preference label contains and the bias associated with obtaining the label. For example, while a graded relevance judgment on a five-point scale may contain more information than a binary judgment, raters may also make more errors due to the complexity of assigning finer-grained judgments. In~\cite{MM09}, it shows that MOS may suffer from three fundamental problems: (i) it is unable to concretely define the concept of scale; (ii) the interpretations of the scales among raters are highly different; (iii) it is difficult to verify whether a rater gives false ratings either intentionally or carelessly. Therefore, the paired comparison method is currently gaining growing attention. It not only promises assessments that are easier and faster to obtain with less demanding task for raters, but also yields more reliable data with less personal scale bias in practice. However, a shortcoming of paired comparison is that it has more expensive sampling complexity than the MOS test, since the number of pairs grows quadratically with the number of items to be ranked.

To tackle the cost problem, with the growth of crowdsourcing platforms such as \href{https://www.mturk.com}{MTurk}, \href{http://www.innocentive.com/}{InnoCentive}, \href{http://crowdflower.com/}{CrowdFlower},~\href{http://www.crowdrank.net/}{CrowdRank}, and \href{http://www.allourideas.org/}{AllOurIdeas}, researchers who wish to seek help from the Internet crowd can post their task requests on websites for QoE evaluation ~\cite{MM09,tmm12,Hossfeld2014,conf2012-441,Keimel_etal_QoMEX2012_CrowdSourcing_Preprint,Wu13crowd}.
Methods for rating/ranking via pairwise comparisons in QoE evaluation in crowdsourcing scenarios must address a number of inherent difficulties including: (i) incomplete and imbalanced data; (ii) streaming and online data; (iii) outliers. To meet the first challenge, the work in~\cite{added,MM11,tmm12} propose randomized paired comparison methods which accommodate incomplete and imbalanced data. A general framework named~\emph{HodgeRank on Random Graphs} (HRRG) not only deals with incomplete and imbalanced data collected from crowdsourcing studies but also derives the constraints on sampling complexity  that the random selection must adhere to in crowdsourcing experiment. Furthermore, a recent extension of HRRG is introduced in~\cite{MM12,TMM13} to deal with streaming and online data in the crowdsourcing scenario, providing the possibility of making assessment procedure significantly faster without deteriorating the accuracy.

The third challenge of crowdsourcing QoE evaluations is because not every Internet rater is trustworthy.
In other words, due to the lack of supervision when raters perform experiments in crowdsourcing,
they may provide erroneous responses perfunctorily, carelessly, or dishonestly~\cite{MM09}.
Such random decisions are useless and may deviate significantly from other raters' decisions.
So outliers have to be identified and removed in order to achieve a robust QoE evaluation.
Many methods have been developed for outlier detection, such as $M$-estimator ~\cite{Huber81}, Least Median of Squares (LMS)~\cite{rousseeuw1984least}, S-estimators~\cite{rousseeuw1984robust}, Least Trimmed Squares (LTS)~\cite{leroy1987robust}, and Thresholding
based Iterative Procedure for Outlier Detection ($\Theta$-IPOD) \cite{SheOwe11} etc.  Besides, there are also distribution-based~\cite{barnett1994outliers}, depth-based~\cite{johnson1998fast}, distance-based~\cite{knorr1999finding,knorr2000distance},
density-based~\cite{breunig2000lof}, and clustering-based~\cite{jain1999data} methods for outlier detection.
The authors of~\cite{MM09} proposed Transitivity Satisfaction Rate (TSR), which checks all
the intransitive triangles, e.g., $A \succ B \succ C\succ A$, to identify and discard noisy
data provided by unreliable raters in QoE. However, TSR can only be applied to complete and balanced
paired comparison data. When the paired data is incomplete and imbalanced, {i.e.}, having missing edges,
the question of how to detect the noisy pairs remains open. The work in~\cite{MM13} attacks this problem
and formulates the outlier detection as a LASSO problem based on sparse approximations of the cyclic ranking
projection of paired comparison data in Hodge decomposition. Regularization paths of the LASSO problem could
 provide an order on samples tending to be outliers. However, the solution of the LASSO problem is biased.
 Solving the LASSO path is too slow and the problem has to be solved for many times for model selection
  via cross-validation.

%
%However, a major challenge of crowdsourcing QoE evaluation is that some raters try to maximize their received payment while minimizing their own effort and therefore submit low quality work. Thus, it is necessary to detect unreliable inputs and remove them since they may cause inaccuracy in the estimation of QoE scores.
%
%
%The authors of~\cite{MM09} proposed Transitivity Satisfaction Rate (TSR), which checks all
%the intransitive triangles, e.g., $A \succ B \succ C\succ A$, to identify and discard noisy
%data provided by unreliable raters. However, TSR can only be applied to complete and balanced
%paired comparison data. When the paired data is incomplete and imbalanced, {i.e.}, having missing edges,
%the question of how to detect the noisy pairs remains open. The work in~\cite{MM13} attacks this problem
%and formulates the outlier detection as a LASSO problem based on sparse approximations of the cyclic ranking
%projection of paired comparison data in Hodge decomposition. Regularization paths of the LASSO problem could
% provide an order on samples tending to be outliers. However, the solution of the LASSO problem is biased.
% Solving the LASSO problem is too slow and the problem has to be solved for many times for model selection
%  via cross-validation.
%as every sample contributes an outlier indicator variable, solving such a large scale LASSO is expensive, not mentioning the additional cost on model selection via cross-validation.% AIC (Akaike Information Criterion), or BIC (Bayesian Information Criterion) which may not even work well in outlier detection~\cite{SheOwe11}.

In this paper, we propose simple and fast algorithms based on nonconvex optimization for outlier detection and robust ranking in QoE evaluation. The contributions of this paper are as follows:

1. We propose 3 iterative procedures solving some nonconvex optimization problems arising from outlier detection with or without knowing the number of outliers in samples.

2. Theoretical analysis shows that such procedures can reach statistically good estimates under mild conditions.

3. Experiments with simulated and crowdsourcing real-world data show that our algorithms work effectively in practice.

\section{METHODOLOGY} \label{sec:Outlier_Detection}
 In this section, we propose some simple iterative algorithms for outlier detection by solving some nonconvex optimization problems. These algorithms are based on either a prior knowledge on the number of outliers or adaptive estimation of the outlier sparsity size. Specifically, we propose iHT and iLTS with known outlier sparsity size and aLTS for adaptive estimation of outliers without knowing its precise number. In spite of the NP-hardness for finding global optimizers in the worst case, we show that such simple algorithms are able to reach statistically good estimates under mild conditions. Before the algorithms are described, a brief introduction on robust ranking is provided which motivates our main development.

\subsection{Robust Ranking}

%Statistical preference aggregation or ranking/rating from pairwise comparison data is a classical problem which can be traced back to the $18^{th}$ century with the discussions on voting and social choice. This subject area has undergone recent rapid growth in various applications due to the wide spread of the Internet which enables crowdsourcing techniques.

Assume that there are $m$ raters and $n$ items to be ranked by the $m$ raters. Let $N$ be the the total number
of paired comparisons (samples). Let vector $\vY = (Y_{ij}^\alpha)_{i<j; \alpha} \in \mathbb{R}^N$ denote the
degree that rater $\alpha$ prefers item $i$ to item $j$. Without loss of generality, we assume that $Y_{ij}^\alpha>0$
if rater $\alpha$ prefers item $i$ to item $j$ and $Y_{ij}^{\alpha}< 0$ otherwise. In addition, we assume that the
paired comparison data is \emph{skew-symmetric} for each $\alpha$, i.e., $Y^\alpha_{ij}=-Y^\alpha_{ji}$.
In practice, $Y_{ij}^\alpha$ can be continuous, dichotomous or of a $k$-point Likert
scale with $k\geq 3$ according to the strategy used in QoE evaluation.

It is natural to assume that
\begin{equation}
Y_{ij}^\alpha = s_i^\ast - s_j^\ast + Z_{ij}^{\alpha *},
\end{equation}
$\vs^*=(s_1^*,\ldots,s_n^*)^T\in \R^{n}$ is the true ranking score on $n$ items and $Z_{ij}^{\alpha *}$ is the noise satisfying $Z_{ij}^{\alpha *} = - Z_{ji}^{\alpha *}$. When the noise $Z_{ij}^{\alpha *}$ is independent and identically distributed with zero mean, least squares (LS) problem has been used in~\cite{MM11,tmm12,MM12} to derive ranking scores in subjective multimedia assessments.

However, not all comparisons are trustworthy and there may be sparse outliers due to different test conditions, human errors, or abnormal variations in content. Putting in a mathematical way, here we consider
\begin{equation}\label{eq:linear}
Y_{ij}^\alpha = s_i^\ast - s_j^\ast + E_{ij}^{\alpha *} + N_{ij}^{\alpha *},
\end{equation}
or equivalently
\begin{equation}\label{eq:linear-Mtr}
\vY = \vX \vs^* + \vE^* + \vN^*.
\end{equation}
where $\vE^* = (E_{ij}^{\alpha *})\in \mathbb{R}^N$, which models the \emph{outliers}, is sparse and has a much larger magnitude than $\vN^* = (N_{ij}^{\alpha *})$, which models the Gaussian \emph{noise}, and $\vX\in \mathbb{R}^{N\times n}$ satisfies that: if $Y_{ij}^\alpha\ (i<j)$ is the $k$th entry of $\vY$, then the $k$th row of $\vX$ equals to $\mathrm{e}_i - \mathrm{e}_j$, here $\mathrm{e}_i\in \mathbb{R}^n$ satisfies that only the $i$th entry is $1$ and others are $0$. Such $\vX$ is often called the (generalized) ``gradient operator" on graph $G=(\{1,\ldots,n\},\ \{(i,j):Y^\alpha_{ij} \mbox{~exists}\})$, with $\mathbf{L} = \vX^T \vX$ being the (unnormalized) graph Laplacian.

When sparse outliers exist ($E_{ij}^{\alpha *}\neq 0$ for a small number of $(i,j,\alpha)$), the solution to the least squares problem on all the comparisons becomes unstable and may give an inaccurate estimation. If the outliers can be detected and removed, the solution to the least squares problem on the remaining pairwise comparisons is more accurate and gives a better estimation.

In~\cite{MM13}, a robust regression approach called Huber-LASSO is used to detect outliers:
\begin{equation} \label{eq:hlasso}
\Min\limits_{\vs\in {\mathbb{R}}^{n},\vE}  \frac{1}{2} \|\vY-\vX \vs - \vE\|_2^2+\lambda \|\vE\|_1.
\end{equation}
This is a convex optimization problem and the LASSO path $\lambda\mapsto \E_\lambda$ could provide information on the order of samples tending to be outliers.

%A simple geometric interpretation from Hodge decomposition \cite{MM13} is that the outlier $\vE$ is a sparse approximation of the \emph{cyclic} ranking projection that summarizes the conflicts of interests among rater.

However, there are two issues with this approach: 1) the Huber-LASSO estimator $\hat{\vs}$ is always biased, even under the identifiable condition $\vs \perp 1_n$ where $1_n=(1,\ldots,1)^T \in \mathbb{R}^n$; 2) computing the Huber-LASSO path to get top outliers is computationally expensive.

In order to remove the bias in the solution, we replace the $l_1$-norm of $\vE$ in~\eqref{eq:hlasso} with the $l_0$-``norm'' of $\vE$ and obtain
\begin{equation} \label{eq:iht}
\Min\limits_{\vs\in \Rn, \vE} \displaystyle\frac{1}{2} \| \vY - \vX \vs - \vE \|_2^2 + \lambda \|\vE\|_0.
\end{equation}
where $\|\vE\|_0$, the $l_0$-``norm'' of $\vE$, is the number of nonzero components in $\vE$. Although this is a nonconvex optimization problem which is NP-hard in the worst case, in the sequel we shall see that under mild conditions even simple iterative algorithms may detect where the outliers are and lead to statistically good estimators.

%In the remaining part of this section, we will show fast and unbiased algorithms for outlier detection.
%There are a couple of issues in such a Huber-LASSO approach~\cite{MM13}: 1) the LASSO estimator is well-known to be biased; 2) the computational cost of Huber-LASSO path is expensive as every sample is associated with an outlier indicator variable $E^\alpha_{ij}$. To solve (1), one typically exploits Huber-LASSO in outlier detection, followed by a subset least squares with only non-outlier samples. This is often called Least Trimmed Squares (LTS) in robust statistics \cite{LTS}. In the remaining of this section, we will see some iterative versions of LTS leads to fast algorithms for robust ranking which automatically finds the number of outliers in practice.

\subsection{iHT and iLTS with Known $K$}
%Before going on, let's reformulate the problem in vectors/matrixs. , , $\vE, \vN$ are defined similarly.  Obviously $\vX$ does not have full column rank since $\vX 1_n = 0$, where $1_n = (1,1,\ldots,1)^T\in \Rn$. So the model \eqref{eq:linear} can be rewritten as

%Theorems on convergence and consistency properties of the algorithms provided in this paper are based on model~\eqref{eq:linear-Mtr}.

%This problem shares many properties as~\eqref{eq:hlasso}, and the estimator $\vs$ is unbiased. As $\lambda$ increases, the $l_0$-``norm'' of the optimal $\vE$ is non-increasing, i.e., the sparsity of $\vE$ is non-increasing. The path of this problem also provides an order on comparison tending to be outliers.

\begin{algorithm}[htbp]
\caption{iterative Hard Thresholding (iHT)}	\label{alg:iht}
	\begin{algorithmic}
		\State \textbf{Input:} $\vY = (Y_{ij}^\alpha)$, $K\geq0$, $\epsilon > 0$.
		\State \textbf{Initialization:} $\vE^0 = (E_{ij}^\alpha)^0 = 0$.
			\For{$k=0,1,\ldots$}
				%\vspace{-0.25cm}
				\State Update $\vE$ by
                    \begin{align*}
                        \vE^{k+1} = \Proj_K ((\vI_N - \vH)\vY + \vH \vE^k),
                    \end{align*}
                \State {\bf If} $\| \vE^{k+1} - \vE^k \| \le \epsilon$, break.
  %              \State $k =k+1$.
                %\vspace{0.1cm}
			\EndFor
			\State \Return $\hat \vE = \vE^k,\ \hat \vs = (\vX^T \vX)^\dag \vX^T (\vY - \vE^k)$.
	\end{algorithmic}
\end{algorithm}

First of all, Proposition~\ref{prop:iht-equiv}, whose proof is provided in the supplementary material, shows that problem~\eqref{eq:iht} is, in a sense, equivalent to
\begin{equation} \label{eq:iht2}
\left\{\begin{array}{rl}
\Min\limits_{\vs\in {\mathbb{R}}^{n},\vE}& \displaystyle\frac{1}{2} \| \vY - \vX \vs - \vE \|_2^2,\\
\textnormal{subject to }&\|\vE\|_0\leq K
\end{array}\right.
\end{equation}
and
\begin{equation} \label{eq:ho_rank_aop}
\left\{\begin{array}{rl}
\Min\limits_{\vs\in {\mathbb{R}}^{n},\Lambda}& \displaystyle\frac{1}{2} \| \Lambda \circ (\vY - \vX \vs) \|_2^2,\\
\textnormal{subject to }& \Lambda = (\Lambda_{ij}^\alpha) \in \{0,1\}^N,\ \|\Lambda\|_0\geq N - K
\end{array}\right.
\end{equation}
where $\circ$ is elementwise Hadamard product operator. The index of zero entries of $\Lambda$ indicate outliers. Problem~\eqref{eq:ho_rank_aop} is actually the Least Trimmed Squares (LTS) in robust regression \cite{leroy1987robust}. A benefit of \eqref{eq:ho_rank_aop} lies in that the global ranking score $\vs$ does not depend on the outlier magnitude estimate, by dropping off the outliers.

\begin{prop}\label{prop:iht-equiv}
For a given $\lambda > 0$, pick any global optimal $(\tilde\vs, \tilde\vE)$ for problem~\eqref{eq:iht}, and let $K = \|\tilde\vE\|_0$. Let
\begin{align*}
S_1 = \left\{\right. \vs:\ &\textnormal{$\|\vE\|_0 = K$ and}\\
& \textnormal{$(\vs,\vE)$ is optimal for problem~\eqref{eq:iht}}\left.\right\}\\
S_2 = \left\{\right. \vs:\ &\textnormal{$(\vs,\vE)$ is optimal for problem~\eqref{eq:iht2}}\left.\right\}\\
S_3 = \left\{\right. \vs:\ &\textnormal{$(\vs,\Lambda)$ is optimal for problem~\eqref{eq:ho_rank_aop}}\left.\right\}.
\end{align*}
Then $S_1 = S_2 = S_3$.
\end{prop}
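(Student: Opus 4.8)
The plan is to prove the two equalities $S_2 = S_3$ and $S_1 = S_2$ separately, in each case collapsing an ``optimal pair'' statement into a statement about the minimizers of a scalar \emph{profile function} of $\vs$ alone. For a vector $r=(r_1,\dots,r_N)$ write $\sigma_{N-K}(r)$ for the sum of the $N-K$ smallest entries of $(r_1^2,\dots,r_N^2)$. The first observation is that for any fixed $\vs$ the remaining minimization is an explicit combinatorial step: in \eqref{eq:iht2} the best $\vE$ with $\|\vE\|_0\le K$ matches the $K$ largest-magnitude coordinates of $\vY-\vX\vs$ exactly (zeroing those residuals), so the inner minimum is $\tfrac12\sigma_{N-K}(\vY-\vX\vs)=:h(\vs)$; in \eqref{eq:ho_rank_aop} the best $\Lambda$ with $\|\Lambda\|_0\ge N-K$ keeps the $N-K$ smallest-magnitude coordinates, giving the \emph{same} value $h(\vs)$. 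These inner minima are attained (finitely many supports, each an ordinary least-squares subproblem in $\vs$ solved by a pseudo-inverse), so a feasible pair $(\vs,\vE)$ is optimal for \eqref{eq:iht2} iff $h(\vs)=\min_{\vs'}h(\vs')$ and $\vE$ attains the inner minimum at $\vs$; likewise for \eqref{eq:ho_rank_aop}. Hence $S_2=\arg\min_\vs h(\vs)=S_3$.

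For $S_1=S_2$, let $p^\ast$ be the optimal value of \eqref{eq:iht} and $q^\ast=\min_\vs h(\vs)$ the optimal value of \eqref{eq:iht2}. First I would show $S_1\subseteq S_2$: if $(\vs,\vE)$ is optimal for \eqref{eq:iht} with $\|\vE\|_0=K$, then for every $(\vs',\vE')$ with $\|\vE'\|_0\le K$ we have $\tfrac12\|\vY-\vX\vs'-\vE'\|_2^2\ge p^\ast-\lambda\|\vE'\|_0\ge p^\ast-\lambda K=\tfrac12\|\vY-\vX\vs-\vE\|_2^2$, so $(\vs,\vE)$ is optimal for \eqref{eq:iht2} and $\vs\in S_2$; applying this to the given $(\tilde\vs,\tilde\vE)$ (optimal for \eqref{eq:iht} with $\|\tilde\vE\|_0=K$) also records the identity $p^\ast=q^\ast+\lambda K$. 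For the reverse inclusion, take $\vs\in S_2$, so $h(\vs)=q^\ast$, and let $\vE'$ be a minimizer of the inner problem of \eqref{eq:iht2} at $\vs$ of \emph{smallest} support size; then $\tfrac12\|\vY-\vX\vs-\vE'\|_2^2=q^\ast$ and $\|\vE'\|_0\le K$. The objective of \eqref{eq:iht} at $(\vs,\vE')$ equals $q^\ast+\lambda\|\vE'\|_0$, which must be $\ge p^\ast=q^\ast+\lambda K$; hence $\|\vE'\|_0\ge K$, so $\|\vE'\|_0=K$. Therefore $(\vs,\vE')$ attains $p^\ast$ and is optimal for \eqref{eq:iht}, and since $\|\vE'\|_0=K$ we get $\vs\in S_1$. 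Thus $S_2\subseteq S_1$, and $S_1=S_2=S_3$.

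The main point requiring care is the bookkeeping in the $S_1=S_2$ step rather than any genuine difficulty: the penalized problem \eqref{eq:iht} can in principle admit optimal pairs with several different values of $\|\vE\|_0$ (this occurs precisely when some residual sits at the threshold $\sqrt{2\lambda}$), so one must keep $K$ tied to the chosen $(\tilde\vs,\tilde\vE)$, establish $p^\ast=q^\ast+\lambda K$ for that particular $K$, and, crucially, use the \emph{smallest-support} minimizer $\vE'$ in the reverse inclusion so that the inequality $q^\ast+\lambda\|\vE'\|_0\ge p^\ast$ pins $\|\vE'\|_0$ down to exactly $K$ (which is why $S_1$ is deliberately restricted to optima with $\|\vE\|_0=K$). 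A secondary item is attainment of all inner and outer minima, which follows from finiteness of the support patterns together with the fact that each fixed-support subproblem is a standard least-squares problem in $\vs$.
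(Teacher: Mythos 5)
Your proof is correct and takes essentially the same route as the paper's: the same two-inclusion argument for $S_1=S_2$ (adding the unpenalized inequality to $\lambda\|\vE\|_0\le\lambda K$ and forcing equality against the chosen optimum of~\eqref{eq:iht}), and the same observation that for fixed $\vs$ the inner minimizations of~\eqref{eq:iht2} and~\eqref{eq:ho_rank_aop} both evaluate to the sum of the $N-K$ smallest squared residuals, here packaged as the profile function $h$. Your insistence on the smallest-support inner minimizer is harmless but unnecessary: any inner minimizer $\vE'$ already satisfies $\|\vE'\|_0\le K$ by feasibility, and the bound $q^\ast+\lambda\|\vE'\|_0\ge p^\ast=q^\ast+\lambda K$ then pins its support size to exactly $K$.
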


Hence now we turn to problem~\eqref{eq:iht2}~and~\eqref{eq:ho_rank_aop}, both have a parameter $K$, which is considered as an upper bound of the number of outliers.  Because of the two $l_0$-``norm'', finding the global optimal solution is NP-hard. We attempt to find approximate (but sufficient) solutions via the alternating minimization method.

Note that once we fix $\vE = \vE^k$ for problem~\eqref{eq:iht2}, then we just need to solve an ordinary least squares problem and get a corresponding $\vs^k$ simply by
\begin{equation}\label{eq:iht-update-s}
\vs^k = (\vX^T \vX)^\dag \vX^T (\vY - \vE^k).
\end{equation}
Here $A^\dag$ is the Moore--Penrose pseudoinverse of a matrix $A$. And if we fix $\vs = \vs^k$, we just need to take a Hard Thresholding, i.e.
\begin{equation}\label{eq:iht-update-E}
\vE^{k+1} = \Proj_K \left( \vY - \vX \vs^k \right),
\end{equation}
where $\Proj_K$ is an operator which sets all entries to $0$ except $K$ entries with largest squares. For example,
\begin{align*}
\Proj_3 (-1,5,2,-4,-6) = (0,5,0,-4,-6).
\end{align*}
Plugging \eqref{eq:iht-update-s} into \eqref{eq:iht-update-E}, such a procedure implies
\begin{align*}
\vE^{k+1} & = \Proj_K(\vY - \vX (\vX^T \vX)^\dag \vX^T (\vY - \vE^k))\\
& = \Proj_K ((\vI_N - \vH)\vY + \vH \vE^k),
\end{align*}
where $\vH = \vX(\vX^T \vX)^\dag \vX^T$ is the ``hat matrix''. Such a procedure is described precisely in Algorithm~\ref{alg:iht} and called iterative Hard Thresholding (iHT).

For problem~\eqref{eq:ho_rank_aop}, when $\Lambda^k$ is fixed, update $\vs$ by solving a least squares problem using only the comparisons indicated by $\Lambda^k$, i.e.
\begin{equation}\label{eq:iLTS-update-s}
  \vs^k = (\vX^T \mathrm{diag}(\Lambda^k) \vX)^\dag (\vX^T \mathrm{diag}(\Lambda^k) \vY).
\end{equation}
When fixing $\vs = \vs^k$, updating $\Lambda$ is to choose $N - K$ entries of $\vY - \vX\vs^k$ with smallest squares, then set the $N - K$ corresponding entries of $\Lambda^{k+1}$ to be $1$, and others to be $0$. The procedure is described precisely in Algorithm~\ref{alg:ilts}.
\begin{algorithm}[t]
\caption{An Iterative Procedure for LTS (iLTS)}	\label{alg:ilts}
	\begin{algorithmic}
		\State \textbf{Input:} $\vY = (Y_{ij}^\alpha)$, $K\geq0$.
		\State \textbf{Initialization:} $\Lambda^0 = (\Lambda_{ij}^\alpha)^0=1_N$.
			\For{$k=0,1,\ldots$}
				%\vspace{-0.25cm}
				\State Update $\vs$ to get $\vs^k$ by \eqref{eq:iLTS-update-s}.
                \State Update $\Lambda$ by choosing $N - K$ entries of $\vY - \vX\vs^k$ with smallest squares\footnotemark, then setting the $N - K$ corresponding entries of $\Lambda^{k+1}$ to be $1$, and others to be $0$. %There are multiple choices if the $K$th and $(K+1)$th largest squares have the same value, otherwise there is only one choice. In any case, we \emph{must} choose one different from all $\Lambda$'s appeared before. If we can no longer do this, break.
  %              \State $k =k+1$.
                %\vspace{0.1cm}
                \State Check if the new $\Lambda^{k+1}$ is different from all $\Lambda^l$ ($l\leq k$) appeared before. \textbf{If} not, break.
			\EndFor
			\State \Return $\hat \Lambda = \Lambda^k,\ \hat \vs = \vs^k$.
	\end{algorithmic}
\end{algorithm}

\footnotetext{If the $K$th and $(K+1)$th largest squares have the same value, there are multiple choices of $\Lambda^{k+1}$. In this case, randomly choose one of them different from all $\Lambda$'s appeared before. If all the choices have appeared, break.}

\subsection{Consistency of iHT and iLTS}
A natural question is, under what conditions can these two algorithms detect the true outlier set. The following theorems, whose proofs are given in the supplementary material, present some RIP-like sufficient conditions which can be met in outlier detection.
\begin{theorem}[Sparsistency of iHT]
\label{thm:iht-cons}
Assume that $\vY = (Y_{ij}^\alpha)$ satisfies the model \eqref{eq:linear-Mtr} with $\|\vE^*\|_0 = K^*$ and $\vE_{\min}^* = \min_{E_{ij}^{\alpha *}\neq 0} |E_{ij}^{\alpha *}|$.
Now, for arbitrary $K\ge K^*$ satisfying
\begin{equation} \label{eq:RIP1}
\theta := \sup_{J\subseteq \{1,2,\ldots,N\},\ |J|\le 3K} \left\| \vX_J (\vX^T \vX)^\dag \vX_J^T \right\|_2 < \frac{1}{2}
\end{equation}
(here $\vX_J$ is the submatrix consist of some columns of $\vX$ indexed by $J$), $\vE^k$ in Algorithm~\ref{alg:iht} converges to the true outlier vector $\vE^*$ in the following sense
\begin{align} \label{eq:iht-cons-E}
\|\vE^k - \vE^*\|_2\le (2\theta)^k \cdot \| \vE^0 - \vE^* \|_2 + \frac{2 \|\vN^*\|_2}{1-2\theta}.
\end{align}
Moreover, if
\begin{align} \label{eq:iht-cons-N}
\theta < \frac{1}{2} - \frac{\|\vN^*\|_2}{\vE_{\min}^*},
\end{align}
then for sufficiently large $k$, $\supp(\vE^k) \supseteq \supp(\vE^*)$ holds. If~\eqref{eq:iht-cons-N} holds and $K = K^*$ additionally, then for sufficiently large $k$, $\supp(\vE^k) = \supp(\vE^*)$ holds.
\end{theorem}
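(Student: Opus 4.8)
The plan is to recognize Algorithm~\ref{alg:iht} as iterative hard thresholding for the model \eqref{eq:linear-Mtr} and to run the classical contraction argument, keeping careful track of constants so as to land exactly on \eqref{eq:iht-cons-E}. \textbf{Step 1 (reduction).} Since $\vH=\vX(\vX^T\vX)^\dag\vX^T$ is the orthogonal projector onto $\im(\vX)$, one has $\vH\vX=\vX$, i.e.\ $(\vI_N-\vH)\vX=0$; substituting $\vY=\vX\vs^*+\vE^*+\vN^*$ into the update of Algorithm~\ref{alg:iht} and writing $\Delta^k:=\vE^k-\vE^*$ turns the iteration into $\vE^{k+1}=\Proj_K\big(\vE^*+\vH\Delta^k+(\vI_N-\vH)\vN^*\big)$. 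Because $K\ge K^*=\|\vE^*\|_0$, the true outlier vector $\vE^*$ is itself a feasible $K$-sparse point, which is precisely what the next step exploits.

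\textbf{Step 2 (one-step contraction).} Let $\mathbf{r}^k:=\vH\Delta^k+(\vI_N-\vH)\vN^*$, $T:=\supp(\vE^{k+1})$, $T^*:=\supp(\vE^*)$, $B:=T\cup T^*$ (so $|B|\le 2K$). On $T$ the error $\Delta^{k+1}$ equals $\mathbf{r}^k$, while on $T^*\setminus T$ it equals $-\vE^*$; since every coordinate retained by $\Proj_K$ dominates in magnitude every discarded one (with $|T^*\setminus T|\le|T\setminus T^*|$, the degenerate case $|\supp(\vE^{k+1})|<K$ being only more favourable) and $\vE^*$ vanishes off $T^*$, one gets $\|\vE^*_{T^*\setminus T}\|_2\le\|\mathbf{r}^k_{T\setminus T^*}\|_2+\|\mathbf{r}^k_{T^*\setminus T}\|_2$. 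Assembling the two pieces via the Pythagorean identity for disjoint supports together with the elementary bound $2bc\le b^2+c^2$ gives $\|\Delta^{k+1}\|_2\le 2\|\mathbf{r}^k_B\|_2$. Then $\|\mathbf{r}^k_B\|_2\le\|(\vH\Delta^k)_B\|_2+\|(\vI_N-\vH)\vN^*\|_2\le\|(\vH\Delta^k)_B\|_2+\|\vN^*\|_2$ (the projector $\vI_N-\vH$ is non-expansive), and since $\Delta^k$ is supported on $S:=\supp(\vE^k)\cup T^*$ while $(\vH\Delta^k)_B=\vH_{B,S}\,\Delta^k_S$ with $\vH_{B,S}$ a submatrix of the principal block $\vX_J(\vX^T\vX)^\dag\vX_J^T$, $J:=\supp(\vE^k)\cup\supp(\vE^{k+1})\cup T^*$ of size $\le 3K$, condition \eqref{eq:RIP1} yields $\|(\vH\Delta^k)_B\|_2\le\theta\|\Delta^k\|_2$. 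Hence $\|\Delta^{k+1}\|_2\le 2\theta\|\Delta^k\|_2+2\|\vN^*\|_2$; iterating from $\vE^0$ and summing the geometric series ($2\theta<1$) produces \eqref{eq:iht-cons-E}.

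\textbf{Step 3 (support recovery).} From \eqref{eq:iht-cons-E} and $(2\theta)^k\to0$ we obtain $\|\vE^k-\vE^*\|_2\le(2\theta)^k\|\vE^0-\vE^*\|_2+\tfrac{2\|\vN^*\|_2}{1-2\theta}$, and \eqref{eq:iht-cons-N} is precisely equivalent to $\tfrac{2\|\vN^*\|_2}{1-2\theta}<\vE_{\min}^*$, so for all sufficiently large $k$ one has $\|\vE^k-\vE^*\|_2<\vE_{\min}^*$. If some $i\in\supp(\vE^*)$ were missing from $\supp(\vE^k)$, then $E^k$ would vanish there, forcing $\|\vE^k-\vE^*\|_2\ge\vE_{\min}^*$, a contradiction; hence $\supp(\vE^k)\supseteq\supp(\vE^*)$ eventually. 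If moreover $K=K^*$, then $|\supp(\vE^k)|\le K=K^*=|\supp(\vE^*)|$, which together with the inclusion forces $\supp(\vE^k)=\supp(\vE^*)$.

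I expect the crux to be Step 2, and within it the constant tracking: a crude triangle inequality produces a factor worse than $2$ in $\|\Delta^{k+1}\|_2\le 2\|\mathbf{r}^k_B\|_2$, so one must keep the contributions on $T\cap T^*$, $T\setminus T^*$ and $T^*\setminus T$ strictly separated, use Pythagoras rather than $\ell_2$ subadditivity, and close with $2bc\le b^2+c^2$. The other delicate point is the bookkeeping of support cardinalities that lets the single quantity $\theta$ of \eqref{eq:RIP1} (a supremum over index sets of size at most $3K$) simultaneously control both the $\vH\Delta^k$ term and the projected-noise term; once Step 2 is in place, Step 3 is immediate.
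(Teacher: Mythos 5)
Your proof is correct and arrives at exactly the recursion the paper uses, $\|\vE^{k+1}-\vE^*\|_2\le 2\theta\|\vE^k-\vE^*\|_2+2\|\vN^*\|_2$, with the same $3K$-support set $J_k=\supp(\vE^k)\cup\supp(\vE^{k+1})\cup\supp(\vE^*)$ controlling $\vH$ via $\theta$; Steps 1 and 3 coincide with the paper essentially verbatim. The difference is in how the one-step contraction is extracted from the optimality of $\Proj_K$. The paper uses the global variational inequality $\|w-\vE^*\|_2^2\ge\|w-\vE^{k+1}\|_2^2$ (with $w=(\vI_N-\vH)\vY+\vH\vE^k$), expands the square to get $\|\vE^{k+1}-\vE^*\|_2^2\le 2(w-\vE^*)^T(\vE^{k+1}-\vE^*)$, bounds the two cross terms by $\theta\|\vE^k-\vE^*\|_2\|\vE^{k+1}-\vE^*\|_2$ and $\|\vN^*\|_2\|\vE^{k+1}-\vE^*\|_2$, and divides through -- about four lines. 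You instead run the Blumensath--Davies/Foucart support-splitting argument: decompose over $T\cap T^*$, $T\setminus T^*$, $T^*\setminus T$, compare retained versus discarded coordinates, and close with Pythagoras and $2bc\le b^2+c^2$. That route is also sound (your chain actually yields $\|\Delta^{k+1}\|_2\le\sqrt{3}\,\|\mathbf{r}^k_B\|_2$, so the factor $2$ you quote is safely available), but it is longer and requires the cardinality bookkeeping $|T^*\setminus T|\le|T\setminus T^*|$ and the degenerate-case caveat, none of which the paper's quadratic-expansion argument needs. What your version buys is that it is the standard template in the compressed-sensing literature and makes explicit where each piece of the error lives; what the paper's version buys is brevity and a single clean application of the restricted-norm bound.
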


\begin{remark}
Condition~\eqref{eq:RIP1} resembles the condition in \cite{foucart2012sparse}, with the measurement matrix $A$ replaced by $\vI_N - \vH$, and the number of nonzero entries $s$ replaced by $K$.
\end{remark}

\begin{remark}
According to the statement of the theorem above, we should choose $K$ to be at least $K^*$. But it is unnecessary to exactly let $K$ be the unknown number $K^*$, since we allow $K$ to be larger than $K^*$. However, usually $K$ can not be too large, due to the condition $\theta < 1/2$ which must be satisfied.

In the definition of $\theta$, note that $\vX_J (\vX^T \vX)^\dag \vX_J^T := \vH_{J,J}$ is a $|J|\times |J|$ submatrix of $\vH$, and $\|\vH\|_2 \le 1$ always holds since $\vI_N - \vH$ is always positive semi-definite. If $3K$ (upper bound of $|J|$) is small enough, then $\theta$ can be smaller than $1/2$, satisfying the proposed condition. For example, if $n=10$, $K=1$, and each pair has exactly one comparison, then $\theta = 0.4 < 1/2$.
\end{remark}

\begin{theorem}[Convergence of iLTS]
\label{thm:lts-conv} Algorithm~\ref{alg:ilts} converges in finite steps. Moreover, let
\begin{align*}
F(\vs,\Lambda) =\ & \frac{1}{2} \| \Lambda \circ (\vY - \vX \vs) \|_2^2\\
& + \iota(\Lambda\in \{0,1\}^K,\ \|\Lambda\|_0\ge N - K),
\end{align*}
where $\iota(A)$ is the indicator function, which equals $0$ if the event $A$ happens, and equals $+\infty$ otherwise. Then the output $\vs^k$ with the corresponding $\Lambda^k$ satisfies
\begin{enumerate}
    \item
        $(\vs^k,\Lambda^k)$ is a coordinatewise minimum point of $F(\vs,\Lambda)$, namely, for any $\vs, \Lambda$,
        \begin{align*}
        F(\vs^k, \Lambda^k)\le F(\vs^k, \Lambda),\\
        F(\vs^k, \Lambda^k)\le F(\vs, \Lambda^k).
        \end{align*}
    \item
        $\vs^k$ is a local minimum point of $E(\vs) := \min_{\Lambda} F(\vs,\Lambda)$.
\end{enumerate}
\end{theorem}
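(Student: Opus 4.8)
The plan is to prove the three claims in the order stated: finite termination, the coordinatewise‑minimality of part~1, and the local‑minimality of part~2, the last being the delicate one. I would begin by recording two structural facts about Algorithm~\ref{alg:ilts}. \emph{Finite termination:} each $\Lambda^{k+1}$ has exactly $N-K$ nonzero entries, hence ranges over a finite set, and the update rule (including the footnoted tie‑breaking) always selects a $\Lambda^{k+1}$ not previously seen whenever this is possible and breaks otherwise; so $\Lambda^0,\Lambda^1,\dots$ are pairwise distinct until the break, which therefore must occur within finitely many steps. \emph{Monotonicity:} by \eqref{eq:iLTS-update-s}, $\vs^k$ minimizes $\vs\mapsto F(\vs,\Lambda^k)$, while taking the $N-K$ smallest squared residuals makes $\Lambda^{k+1}$ minimize $\Lambda\mapsto F(\vs^k,\Lambda)$ over admissible $\Lambda$ (including more indices only raises the sum of squares, so $\|\Lambda\|_0\ge N-K$ is tight); hence $F(\vs^0,\Lambda^0)\ge F(\vs^0,\Lambda^1)\ge F(\vs^1,\Lambda^1)\ge\cdots$ is nonincreasing.

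I would then show that the returned selection is optimal at the returned point, i.e.\ $F(\vs^k,\Lambda^k)=E(\vs^k)=\min_{\Lambda}F(\vs^k,\Lambda)$. Let $\hat\Lambda$ be any selection the thresholding step could output at $\vs^k$ (unique when there is no tie at the trimming threshold, otherwise ranging over all tie‑breaks). The stopping rule forces $\hat\Lambda=\Lambda^l$ for some $l\le k$; since $\hat\Lambda$ minimizes $F(\vs^k,\cdot)$ and $\vs^l$ minimizes $F(\cdot,\Lambda^l)$, the monotone chain from index $l$ to $k$ collapses to equalities, yielding $F(\vs^k,\Lambda^k)=F(\vs^k,\hat\Lambda)=E(\vs^k)$. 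Part~1 is then immediate: $F(\vs^k,\Lambda^k)\le F(\vs^k,\Lambda)$ for every $\Lambda$ is precisely this identity, and $F(\vs^k,\Lambda^k)\le F(\vs,\Lambda^k)$ for every $\vs$ holds because $\vs^k$ minimizes $F(\cdot,\Lambda^k)$.

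For part~2, I would set $\mathcal{L}^\ast=\{\Lambda:F(\vs^k,\Lambda)=E(\vs^k)\}$, the finite set of admissible selections attaining the trimmed minimum at $\vs^k$, so $\Lambda^k\in\mathcal{L}^\ast$ by the previous step. Two ingredients finish the argument. First, continuity of each quadratic $\vs\mapsto F(\vs,\Lambda)$ together with finiteness of the family produces a neighborhood $U$ of $\vs^k$ on which $E(\vs)=\min_{\Lambda\in\mathcal{L}^\ast}F(\vs,\Lambda)$, because every $\Lambda\notin\mathcal{L}^\ast$ has $F(\vs^k,\Lambda)$ strictly above $E(\vs^k)$ and hence stays above on a small enough $U$ (the degenerate case $E(\vs^k)=0$ is trivial since $E\ge0$ everywhere). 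Second, for every $\Lambda\in\mathcal{L}^\ast$ one has $\min_{\vs'}F(\vs',\Lambda)\ge E(\vs^k)$: for $\Lambda=\Lambda^k$ the minimum equals $F(\vs^k,\Lambda^k)=E(\vs^k)$, and any other member of $\mathcal{L}^\ast$ is one of the tie‑breaking selections, hence equals some $\Lambda^l$ with $l\le k$, so $\min_{\vs'}F(\vs',\Lambda)=F(\vs^l,\Lambda^l)\ge F(\vs^k,\Lambda^k)=E(\vs^k)$ by monotonicity. Combining, for $\vs\in U$,
\begin{equation*}
E(\vs)=\min_{\Lambda\in\mathcal{L}^\ast}F(\vs,\Lambda)\ \geq\ \min_{\Lambda\in\mathcal{L}^\ast}\min_{\vs'}F(\vs',\Lambda)\ \geq\ E(\vs^k),
\end{equation*}
which is the asserted local minimality.

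The step I expect to be the main obstacle is the tie case underlying the optimality identity and part~2: when several selections tie for the trimmed minimum at the returned point, $\vs^k$ need not minimize $F(\cdot,\Lambda)$ for each of them, and a naive first‑order perturbation of $E$ only gives $E(\vs^k+\delta)\le E(\vs^k)+O(\|\delta\|)$, the wrong direction. The way around it is to exploit the bookkeeping of Algorithm~\ref{alg:ilts}: its stopping rule guarantees that every tying selection has already been realized as some earlier $\Lambda^l$, and monotonicity of $F$ then bounds $\min_{\vs'}F(\vs',\Lambda)$ below by $E(\vs^k)$ uniformly over the whole tie class, which is exactly the input consumed by the neighborhood estimate.
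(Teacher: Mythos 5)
Your proposal is correct and follows essentially the same route as the paper's proof: finite termination from the distinctness/stopping rule, monotonicity of $F$ along the iterates, the key observation that every selection optimal at $\vs^k$ must already have appeared as some $\Lambda^l$ (so the monotone chain collapses to equalities), and a continuity argument showing that near $\vs^k$ only the selections in $\mathcal{L}^\ast$ matter. The paper phrases the neighborhood step via the order statistic $\tau_{(N-K+1)}(\vs)$ and a single $\bar\Lambda$ optimal for the perturbed point rather than your uniform bound over $\mathcal{L}^\ast$, but this is only a cosmetic repackaging, and your explicit handling of the $E(\vs^k)=0$ degenerate case is a welcome extra precaution.
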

\begin{remark}
There is no convergence analysis for iHT in general case. But this theorem tells that iLTS always converges, though they are two different iterative algorithms for two equivalent problems.
\end{remark}

\begin{theorem}[Sparsistency of iLTS]
\label{thm:lts-cons}
Assume that $\vY = (Y_{ij}^\alpha)$ satisfies the model \eqref{eq:linear-Mtr} with $\|\vE^*\|_0 = K^*,\ \vE_{\min}^* = \min_{E_{ij}^{\alpha *}\neq 0} |E_{ij}^{\alpha *}|$ and $\Lambda^*\in \{0,1\}^N$ satisfying
\begin{equation*}
\Lambda_{ij}^{\alpha *} =
\left\{\begin{array}{rl}
1, & E_{ij}^{\alpha *} = 0,\\
0, & E_{ij}^{\alpha *} \neq 0.
\end{array}\right.
\end{equation*}
Now, for arbitrary $K\ge K^*$, let
\begin{subequations}
\begin{align}
\mu &:= \sup_{|J|\ge N - K} \| \vX_{J^c} (\vX_J^T \vX_J)^\dag \vX_J^T \|_2\\
\eta &:= \sup_{|J|\ge N - K} \| \vX_{J^c} (\vI_n - (\vX_J^T \vX_J)^\dag (\vX_J^T \vX_J)) \|_2\\
\epsilon &:= \sqrt{2}\cdot \frac{(2+\mu)\|\vN^*\|_2 + \eta \|\vs^*\|_2}{\vE_{\min}^*}.
\end{align}
\end{subequations}
If
\begin{align} \label{eq:RIP2}
\varphi := \sup_{|J'|\le 2K,\ |J|\ge N - K} \| \vX_{J'} (\vX_J^T \vX_J)^\dag \vX_{J'}^T \|_2 < \sqrt{2} - 1 - \epsilon,
\end{align}
then for the $\hat\Lambda$ corresponding with the output $\hat \vs$ of Algorithm~\ref{alg:ilts}, $\supp(\hat\Lambda) \subseteq \supp(\Lambda^*)$ holds. If~\eqref{eq:RIP2} holds and $K = K^*$ additionally, then $\supp(\hat\Lambda) = \supp(\Lambda^*)$.
\end{theorem}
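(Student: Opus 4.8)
The plan is to argue by contradiction, using the structure of the iLTS output $(\hat\vs,\hat\Lambda)$ guaranteed by Theorem~\ref{thm:lts-conv}. Write $J:=\supp(\hat\Lambda)$, so $|J|=N-K$, and $r:=\vY-\vX\hat\vs$; throughout, a subscript $J$ (resp.\ $J^c$, $A$, $C$) on a vector denotes restriction to those coordinates and on $\vX$ denotes the corresponding rows. Since $(\hat\vs,\hat\Lambda)$ is a coordinatewise minimum of $F$ and $\hat\vs$ is the iterate \eqref{eq:iLTS-update-s}, we have $\hat\vs=(\vX_J^T\vX_J)^\dag\vX_J^T\vY_J$, and minimality in $\Lambda$ gives the thresholding property $r_j^2\le r_i^2$ for all $j\in J$, $i\in J^c$. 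Now suppose $\supp(\hat\Lambda)\not\subseteq\supp(\Lambda^*)$, i.e.\ $A:=J\cap\supp(\Lambda^*)^c\neq\emptyset$ (true outliers flagged clean). Counting (using $|J^c|=K\ge K^*$ and $|\supp(\Lambda^*)|=N-K^*$) shows that $C:=J^c\cap\supp(\Lambda^*)$, the truly clean comparisons flagged as outliers, satisfies $|C|=K-K^*+|A|\ge|A|\ge 1$. The thresholding property then yields $\|r_A\|_2^2\le|A|\min_{c\in C}r_c^2\le\|r_C\|_2^2$, so the proof reduces to establishing the reverse strict inequality $\|r_A\|_2>\|r_C\|_2$ under \eqref{eq:RIP2}.

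To this end I would substitute $\vY=\vX\vs^*+\vE^*+\vN^*$ and use the identity $\vX_J(\vX_J^T\vX_J)^\dag(\vX_J^T\vX_J)=\vX_J$, which holds even when $\vX_J$ has a nontrivial kernel (e.g.\ a disconnected comparison graph). Writing $P_J:=\vX_J(\vX_J^T\vX_J)^\dag\vX_J^T$, this gives $r_J=(\vI-P_J)(\vE^*_J+\vN^*_J)$ and, using $\vE^*_C=0$,
\[
r_C=\vN^*_C+\bigl[\vX_{J^c}\bigl(\vI_n-(\vX_J^T\vX_J)^\dag(\vX_J^T\vX_J)\bigr)\vs^*\bigr]_C-\bigl[\vX_{J^c}(\vX_J^T\vX_J)^\dag\vX_J^T(\vE^*_J+\vN^*_J)\bigr]_C .
\]
Since $\vE^*_J$ is supported on $A$ with every nonzero entry of magnitude at least $\vE_{\min}^*$, restricting $r_J$ to $A$ gives $r_A=(\vI_{|A|}-(P_J)_{A,A})\vE^*_A+[(\vI-P_J)\vN^*_J]_A$; because $(P_J)_{A,A}=\vX_A(\vX_J^T\vX_J)^\dag\vX_A^T$ and $|A|\le K^*\le K\le 2K$, condition \eqref{eq:RIP2} (taken with $J'=A$) gives $\|(P_J)_{A,A}\|_2\le\varphi$, whence
\[
\|r_A\|_2\ge(1-\varphi)\|\vE^*_A\|_2-\|\vN^*\|_2\ge(1-\varphi)\vE_{\min}^*-\|\vN^*\|_2 .
\]

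For the upper bound I would bound the terms of $r_C$ by their operator-norm factors: $\|\vN^*_C\|_2\le\|\vN^*\|_2$; the $\vs^*$-term by $\eta\|\vs^*\|_2$ (definition of $\eta$); and split the last term, bounding its $\vN^*_J$-part by $\mu\|\vN^*\|_2$ (definition of $\mu$) and---the delicate step---its $\vE^*_J$-part by $\varphi\|\vE^*_A\|_2$. The last bound uses that $\vX_J^T\vE^*_J=\vX_A^T\vE^*_A$ (because $\vE^*_J$ lives on $A$), so $\vX_{J^c}(\vX_J^T\vX_J)^\dag\vX_A^T$ is an off-diagonal block of $\vX_{J'}(\vX_J^T\vX_J)^\dag\vX_{J'}^T$ with $J'=J^c\cup A$ and $|J'|=K+|A|\le K+K^*\le 2K$, and the spectral norm of a submatrix is at most that of the full matrix. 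Hence $\|r_C\|_2\le(1+\mu)\|\vN^*\|_2+\eta\|\vs^*\|_2+\varphi\|\vE^*_A\|_2$. Subtracting, $\|r_A\|_2-\|r_C\|_2\ge(1-2\varphi)\|\vE^*_A\|_2-(2+\mu)\|\vN^*\|_2-\eta\|\vs^*\|_2$, which is strictly positive once $(1-2\varphi)\vE_{\min}^*>(2+\mu)\|\vN^*\|_2+\eta\|\vs^*\|_2$ (using $\|\vE^*_A\|_2\ge\vE_{\min}^*$ and $\varphi<1/2$); by the definition of $\epsilon$ this is exactly $1-2\varphi>\epsilon/\sqrt2$, and $\varphi<\sqrt2-1-\epsilon$ implies $\varphi<\tfrac12(1-\epsilon/\sqrt2)$, so \eqref{eq:RIP2} suffices. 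This contradicts $\|r_A\|_2\le\|r_C\|_2$, hence $\supp(\hat\Lambda)\subseteq\supp(\Lambda^*)$. If in addition $K=K^*$, then $|\supp(\hat\Lambda)|=N-K=N-K^*=|\supp(\Lambda^*)|$, so the inclusion is an equality.

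The main obstacle is the upper bound on $\|r_C\|_2$: one must keep straight which of the three RIP-type quantities $\varphi,\mu,\eta$ controls which term---in particular recognizing that the outlier contribution to $r_C$ is governed by the small-support quantity $\varphi$ and not by $\mu$ (a bound through $\mu$ there would require the unavailable hypothesis $\varphi+\mu<1$)---and verifying all the cardinality constraints ($|A|\le K^*\le K$, $|C|\ge|A|$, $|J^c\cup A|\le 2K$) needed to legitimately invoke \eqref{eq:RIP2}, while also correctly accounting for the pseudoinverse term responsible for the $\eta\|\vs^*\|_2$ contribution via the identity $\vX_J(\vX_J^T\vX_J)^\dag(\vX_J^T\vX_J)=\vX_J$.
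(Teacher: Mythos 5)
Your argument is correct, and it reaches the conclusion by a genuinely different route from the paper's. Both proofs start from the coordinatewise optimality of the output $(\hat\vs,\hat\Lambda)$ supplied by Theorem~\ref{thm:lts-conv}, and the inequality you reduce to, $\|r_A\|_2^2\le\|r_C\|_2^2$ with $A=J\setminus J^*$ and $C=J^*\setminus J$, is exactly the paper's test inequality $\|\Lambda\circ r\|_2^2\le\|\Lambda^*\circ r\|_2^2$ after cancelling the common block $J\cap J^*$ (you obtain it instead from the elementwise thresholding property together with the count $|C|=K-K^*+|A|\ge|A|$). The real divergence is in how the contradiction is extracted. The paper keeps the full index sets: it bounds $\|r_J\|_2\ge A-\|\vN^*\|_2$ with $A=\|(\vI-\vX_J\vX_J^\dag)\vE_J^*\|_2$ and $\|r_{J^*}\|_2\le(1+\mu)\|\vN^*\|_2+\eta\|\vs^*\|_2+B$ with $B=\|\vX_{J^*}(\vX_J^T\vX_J)^\dag\vX_J^T\vE_J^*\|_2$, and then must run a quadratic argument --- squaring the operator inequality $\vI+\vX_{J'}(\vX_J^T\vX_J)^\dag\vX_{J'}^T\prec(\sqrt{2}-\epsilon)\vI$ to control $A^2-B^2$, then combining with $A-B\le\tfrac{\sqrt{2}}{2}\epsilon\,\vE_{\min}^*$ through $A+B=(A^2-B^2)/(A-B)$; that squaring step is the sole origin of the threshold $\sqrt{2}-1-\epsilon$. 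You restrict to the small symmetric-difference sets from the outset and use only first-order triangle-inequality bounds; the one delicate point --- controlling the outlier contribution to $r_C$ by $\varphi\|\vE_A^*\|_2$ via the off-diagonal block of $\vX_{J'}(\vX_J^T\vX_J)^\dag\vX_{J'}^T$ with $J'=J^c\cup A$ and $|J'|\le 2K$, rather than by $\mu$ --- is handled correctly, as are the cardinality checks needed to invoke \eqref{eq:RIP2}. What each approach buys: yours is more elementary, avoids the squaring trick entirely, and in fact establishes the theorem under the weaker hypothesis $\varphi<\tfrac12-\tfrac{\epsilon}{2\sqrt{2}}$, which is implied by \eqref{eq:RIP2} since $\sqrt{2}-1-\epsilon<\tfrac12-\tfrac{\epsilon}{2\sqrt{2}}$ for every $\epsilon\ge 0$; the paper's quadratic argument produces the $\sqrt{2}-1$ constant in a form that mirrors RIP-style conditions in compressed sensing but is not sharper here.
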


\begin{remark}
In the vast majority of cases, $\eta = 0$. In fact, as long as for each $J\subseteq \{1,2,\ldots,N\},\ |J|\ge N-K$, any row of $\vX_{J^c}$ is a linear combination of $\vX_J$ (which means that, removing the samples indicated by rows of $\vX_{J^c}$ does not disturb the original structure of connected components of the graph), there is a matrix $\mathbf{M}$ such that
\begin{align*}
\vX_{J^c} = \mathbf{M} \vX_J.
\end{align*}
Thus
\begin{align*}
& \vX_{J^c} (\vI_n - (\vX_J^T \vX_J)^\dag (\vX_J^T \vX_J))\\
=\ & \mathbf{M} (\vX_J - \vX_J \cdot (\vX_J^T \vX_J)^\dag \vX_J^T \cdot \vX_J)\\
=\ & \mathbf{M} (\vX_J - \vX_J \vX_J^\dag \vX_J) = 0,
\end{align*}
which implies that $\eta = 0$.
\end{remark}

\begin{remark}
According to the statement of the theorem above, we should choose $K$ to be at least $K^*$. But it is unnecessary to exactly let $K$ be the unknown number $K^*$, since we allow $K$ to be larger than $K^*$. However, usually $K$ can not be too large, due to the condition $\varphi < \sqrt{2} - 1 - \epsilon$ which must be satisfied.
\end{remark}

\begin{remark}
%\textcolor{blue}{(Difference between the conditions of Theorem~\ref{thm:iht-cons} and Theorem~\ref{thm:lts-cons}?)}
Conditions \eqref{eq:RIP1} and \eqref{eq:RIP2} play similar roles as Restricted Isometry Property (RIP) in compressed sensing \cite{CanTao05}.
\end{remark}

%\subsection{Convergence Analysis of AOP}
%We establish some convergence results for AOP in this section.

\subsection{Adaptive LTS with Unknown $K$} \label{subsec:aLTS}
%We further assume $|\{O_{ij}^\alpha:O_{ij}^\alpha = -1\}| < |\{O_{ij}^\alpha:O_{ij}^\alpha = 1\}|$, which means outliers is always minority on every edge. \textcolor[rgb]{1.00,0.00,0.00}{Do we need this assumption? Theorem 4 needs this assumption actually, since there is a correction step as in Remark 7.p}
If the exact number of outliers $K$ is given or can be accurately estimated, Algorithm~\ref{alg:iht} or~\ref{alg:ilts} can be used to detect the outliers and improve the performance of least squares solutions. However, in practice, the exact number of outliers $K$ is generally unknown. If $K$ is underestimated, we are able to remove some outliers and the remaining outliers will still damage the performance of the least squares solutions. On the other hand, if $K$ is overestimated, too many comparisons are removed. The resulting data is not enough for robust QoE evaluation and provides unstable solutions. Therefore, a method to estimate the number of outliers accurately is strongly desired.
%a outlier detection method which can automatically estimate the number of outliers is strongly needed.

We propose a method to estimate the number of outliers automatically for dichotomous choice $Y_{ij}^\alpha\in \{\pm 1\}$. In this case, a natural way is to consider those outliers as the paired comparisons which disagree with the sign (or preference order) of global ranking score differences.

%For dichotomous $Y_{ij}^\alpha$ model, it is nature to define outliers as following
%\begin{equation}\label{outlier-def-2}
%Y_{ij}^\alpha ~is~outlier~ if~ Y_{ij}^\alpha \neq \sign(s_i^\ast - s_j^\ast)
%\end{equation}

As the number of outliers is unknown, firstly we use the least squares problem to find an estimation of $\vs$, then the total number of comparisons with wrong directions ($Y_{ij}^\alpha$ has different sign with $s_i-s_j$), which is denoted as $\widetilde{K}$, is an overestimation of $K$. Then we obtain an underestimation of the number of outliers via multiplying by $\beta_1\in(0,1)$, i.e., $\utilde{K}=\beta_1\widetilde{K}$. We remove $\utilde{K}$ comparisons that have largest violations to the current score because they are most likely to be outliers. The remaining comparisons are used to find the new estimation of $\vs$ via the least squares problem. In this case, we are able to remove some outliers and improve the estimation for $\vs$. With these improved estimation for $\vs$, we are able to remove more outliers. So we increase the underestimation $\utilde{K}$ by $\beta_2$ ($\beta_2\in (1,\infty)$). However, this number can not be larger than $\widetilde{K}$, the smallest overestimation of the number of outliers, because we do not want to remove too many comparisons.
%then the number of outliers according to this $\vs$ can be calculated, i.e., the total number of comparisons with wrong directions ($Y_{ij}^\alpha$ has different sign with $s_i-s_j$) denoted as $\widetilde{K}$. Because this $\vs$ is not accurate and $\widetilde{K}$ is an overestimation of $K$.
%With this underestimated $\utilde{K}$, we can solve the least squares problem after the $\utilde{K}$ comparison that are considered to be outliers removed and obtain an improved $\vs$. Then we have to increase the estimation of the number $\utilde{K}$ by $\beta_2$ ($\beta_2\in (1,\infty)$), but the number can not be larger than $\widetilde{K}$, the total number of comparisons mismatching the current score, because there are only $\widetilde{K}$ outliers with the current score.
Therefore the update of $\utilde{K}$ is just $\utilde{K}=\min(\lceil\beta_2\utilde{K}\rceil,\widetilde{K})$ where $\lceil x \rceil$ is the smallest integer no smaller than positive real number $x$. Iterations go on until $\utilde{K}=\widetilde{K}$, and it gives an accurate estimation of the number of outliers. This algorithm is named aLTS for adaptive Least Trimmed Squares, and Algorithm \ref{alg:IRLS} describes such a procedure precisely.% or simply \emph{Adaptive Least Trimmed Squares}.

\begin{algorithm}[ht]
%\caption{AOP with $K$ update (iterated Least Trimmed Squares)}
\caption{adaptive LTS (aLTS)}	
\label{alg:IRLS}
	\begin{algorithmic}
		\State \textbf{Input:} $\vY = (Y_{ij}^\alpha)$, $\beta_1<1$, $\beta_2>1$.
		\State \textbf{Initialization:} $\Lambda^0 = 1_N$, $\utilde{K}^{-1}=0$, $\widetilde{K}^{-1}=+\infty$.
		\For{$k=0,1,\ldots$}
                    \State Update $\vs$ to get $\vs^k$ by \eqref{eq:iLTS-update-s}.
			        \State Let $\widetilde{K}^k$ be the total number of comparisons with wrong directions, i.e., $Y_{ij}^\alpha$ has different sign with $s^k_i-s^k_j$.
%			        \State {\bf If} $k=1$, Let $\utilde{K}= \beta_1*\widetilde{K}_k$.
			        \begin{align*}
							\widetilde{K}^k  &=\min \{\widetilde{K}^k,\widetilde{K}^{k-1}\}.\\
\utilde{K}^k&=\left\{
\begin{array}{ll}
\lceil\beta_1 \widetilde{K}^{k}\rceil,& \mbox{if $k=0$}; \\
\min(\lceil \beta_2 \utilde{K}^{k-1}\rceil, \widetilde{K}^k),& \mbox{otherwise}.
\end{array}
\right.
\end{align*}
%		          	\State {\bf If} $k>1$, Let $\utilde{K}=\min(\beta_2*\utilde{K},\widetilde{K})$.
			        \State {\bf If} $\utilde{K}^k=\widetilde{K}^k$, break.
		          	\State Update $\Lambda$ to get $\Lambda^{k+1}$ in the same way as in Algorithm~\ref{alg:ilts}, with $K$ replaced by $\utilde{K}^k$.
		\EndFor			
%					        	\State Update $\vs$ to get $\hat\vs$ by solving a least squares problem using only the comparisons indicated by $\Lambda^k$.
%\State \textcolor{red}{Find $\hat{K}$ as the total number of mismatches between $Y_{ij}^\alpha$ and $\hat{s}_i-\hat{s}_j$.}
			\State \Return $\hat \Lambda = \Lambda^k,\ \hat\vs = \vs^k,\ \hat K=\widetilde{K}^k$.
	\end{algorithmic}
\end{algorithm}

\begin{remark} \label{remark2}There are only two parameters to choose, and these two parameters are easy to set. They are chosen according to inequalities $\beta_1<1 < \beta_2$ ($\beta_1=0.75$ and $\beta_2=1.03$ are fixed in our numerical experiments). $\beta_1$ has to be small to make sure that the first estimation of the number of outliers is underestimated. Then the underestimation $\utilde{K}$ increases geometrically with rate $\beta_2$, and $\beta_2$ can not be too large, because the remain comparisons are not enough for robust QoE evaluation after too many comparisons are removed. \end{remark}

\begin{remark} \label{remark3}The algorithm is able to detect most of the outliers in our experiments. However, there may be mistakes in the detection, and these mistakes happen mostly between two successive items in the order. Therefore, we can add one step to just compare every pair of two successive items and make the correction on the detection, i.e., if $s_i^k>s_j^k$, but $|\{Y_{ij}^\alpha:Y_{ij}^\alpha>0\}| < |\{Y_{ij}^\alpha:Y_{ij}^\alpha<0\}|$, then remove $\{Y_{ij}^\alpha:Y_{ij}^\alpha<0\}$ from outliers and add in $\{Y_{ij}^\alpha:Y_{ij}^\alpha>0\}$.
\end{remark}

Algorithm~\ref{alg:IRLS} always stops in finite steps, as shown in the following lemma.
\begin{lemma}
Algorithm \ref{alg:IRLS} stops in no more than $k^*$ steps, where
%\[ k^* =\left\lfloor \frac{\log C - \log \beta_1 \widetilde{K}_1}{ \log \beta_2} \right\rfloor + k_0 + 1. \]
\[ k^* =\left\lfloor \frac{ -\log \beta_1 }{ \log \beta_2} \right\rfloor + 2. \]
\end{lemma}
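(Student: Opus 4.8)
The plan is to follow the two integer sequences $\utilde{K}^{k}$ (the running underestimate) and $\widetilde{K}^{k}$ (the running overestimate) maintained by Algorithm~\ref{alg:IRLS}, and to show that the underestimate grows at least geometrically with rate $\beta_2$ until it overtakes the overestimate, which itself never increases. First I would record two elementary facts taken directly from the update rules. (i) \emph{Monotonicity of $\widetilde K$:} since the algorithm resets $\widetilde{K}^{k}=\min\{\widetilde{K}^{k},\widetilde{K}^{k-1}\}$, the sequence $\widetilde{K}^{0}\ge\widetilde{K}^{1}\ge\cdots$ is non-increasing; in particular $\widetilde{K}^{k}\le\widetilde{K}^{0}$ for all $k$. (ii) \emph{Finiteness:} after iteration $0$ the initial value $\widetilde{K}^{-1}=+\infty$ is erased by the first $\min$, and $\widetilde{K}^{0}$ is a count of wrong-direction comparisons, hence a finite integer in $\{0,\dots,N\}$; consequently every $\utilde{K}^{k},\widetilde{K}^{k}$ is a finite non-negative integer, so the stopping test and the geometric-growth estimate below are well posed.

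Next I would analyze the iterations at which the algorithm does \emph{not} break. Fix $k\ge 1$ and suppose the break condition $\utilde{K}^{j}=\widetilde{K}^{j}$ fails for every $j\in\{1,\dots,k\}$. For each such $j$ the update reads $\utilde{K}^{j}=\min\bigl(\lceil\beta_2\utilde{K}^{j-1}\rceil,\widetilde{K}^{j}\bigr)$; if the minimum were attained by the second argument we would have $\utilde{K}^{j}=\widetilde{K}^{j}$, contradicting the non-break assumption, so in fact $\utilde{K}^{j}=\lceil\beta_2\utilde{K}^{j-1}\rceil\ge\beta_2\,\utilde{K}^{j-1}$. Telescoping from $j=1$ to $k$, and using $\utilde{K}^{0}=\lceil\beta_1\widetilde{K}^{0}\rceil\ge\beta_1\widetilde{K}^{0}$ together with fact (i), I obtain
\[
\utilde{K}^{k}\ \ge\ \beta_2^{k}\,\utilde{K}^{0}\ \ge\ \beta_2^{k}\beta_1\,\widetilde{K}^{0}\ \ge\ \beta_2^{k}\beta_1\,\widetilde{K}^{k}.
\]

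Finally I would substitute the index $k=k^*-1=\lfloor -\log\beta_1/\log\beta_2\rfloor+1$, which is a genuine positive integer since $\beta_1<1$ forces $-\log\beta_1>0$ and $\beta_2>1$ forces $\log\beta_2>0$. Because $k\ge -\log\beta_1/\log\beta_2$ and $t\mapsto\beta_2^{t}$ is increasing, $\beta_2^{k}\ge\beta_2^{-\log\beta_1/\log\beta_2}=1/\beta_1$, hence $\beta_2^{k}\beta_1\ge 1$ and the displayed chain gives $\utilde{K}^{k}\ge\widetilde{K}^{k}$. Since the form of the update always forces $\utilde{K}^{k}\le\widetilde{K}^{k}$, we get $\utilde{K}^{k}=\widetilde{K}^{k}$, contradicting the non-break assumption at $j=k$. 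Therefore the break condition must already have been triggered at some iteration $j\le k^*-1$, i.e.\ the algorithm halts within $k^*$ iterations. (The degenerate case $\widetilde{K}^{0}=0$ is covered too: then $\utilde{K}^{0}=\lceil 0\rceil=0=\widetilde{K}^{0}$ and the algorithm stops at iteration $0$.)

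The argument is essentially bookkeeping, and only one step really needs care: the last inequality of the displayed chain, where the geometrically grown underestimate $\utilde{K}^{k}$ must be compared against the \emph{current} overestimate $\widetilde{K}^{k}$ rather than the original $\widetilde{K}^{0}$ that seeded $\utilde{K}^{0}$; the comparison survives precisely because of monotonicity fact (i), $\widetilde{K}^{k}\le\widetilde{K}^{0}$. The only other point is the trivial bound $\lceil x\rceil\le\lfloor x\rfloor+1$, which is what converts the sharp stopping threshold into the stated $k^*=\lfloor -\log\beta_1/\log\beta_2\rfloor+2$: one added unit comes from the ceiling-to-floor slack and one from counting iteration $0$ as the first step.
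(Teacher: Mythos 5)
Your proof is correct and follows essentially the same route as the paper's: both exploit that $\{\widetilde{K}^k\}$ is non-increasing while, as long as the break has not fired, $\utilde{K}^k=\lceil\beta_2\utilde{K}^{k-1}\rceil\ge\beta_2\utilde{K}^{k-1}$, so telescoping against $\utilde{K}^0\ge\beta_1\widetilde{K}^0$ forces $\beta_2^{k}\beta_1\le 1$ and hence termination by step $k^*$. Your write-up is somewhat more careful than the paper's (explicitly ruling out the case where the $\min$ is attained by $\widetilde{K}^k$, and handling $\widetilde{K}^0=0$), but the underlying argument is identical.
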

\begin{proof}
It follows from the fact that the sequence $\{\widetilde{K}^k\}$ is non-increasing, and $\{\utilde{K}^k\}$ is a geometrically increasing sequence which is bounded by the smallest component of $\{\widetilde{K}^k\}$. Specifically, assume that $k^*$ steps have been taken in Algorithm~\ref{alg:IRLS}, then $k$ has approached $k^*-1$, and $\utilde{K}^{k} \ge \beta_2 \utilde{K}^{k-1}$ for $0<k<k^*-1$, so
\begin{align*}
\widetilde{K}^0 \ge \widetilde{K}^{k^*-2} \ge \utilde{K}^{k^*-2} \ge \beta_2^{k^*-2} \utilde{K}^0 \ge \beta_2^{k^*-2} \beta_1 \widetilde{K}^0,
\end{align*}
which leads to the result.
\end{proof}

Such a result only ensures that the algorithm stops with a possible overestimation of the number of outliers because $\widetilde{K}^k$ is always an overestimation for the number of outliers. The following theorem presents a stability condition when Algorithm \ref{alg:IRLS} returns the correct number of outliers.

\begin{theorem} \label{thm:alts} Consider binary choice data with outliers
\begin{equation}\label{outlier-def-2}
Y_{ij}^\alpha \mbox{~is~an~outlier,~if~} Y_{ij}^\alpha \neq \sign(s_i^\ast - s_j^\ast).
\end{equation}
Assume that there exists an integer $k_0$ such that for all $k\geq k_0$, least squares estimator $\vs^k$ is order-consistent to the true score $\vs^*$, i.e., $\vs^k$ induces the same ranking order as the true score ${\vs}^\ast$, then Algorithm \ref{alg:IRLS} returns the correct number of outliers.
\end{theorem}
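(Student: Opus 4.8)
The plan is to keep track of the two integer sequences that Algorithm~\ref{alg:IRLS} maintains --- the running minimum $\widetilde K^k$ of the ``wrong-direction'' counts and the geometrically growing under-estimate $\utilde K^k$ --- and to show that, once the least-squares iterates become order-consistent, the former is frozen at the true outlier number $K^\ast$, namely the number of $(i,j,\alpha)$ with $Y_{ij}^\alpha\neq\sign(s_i^\ast-s_j^\ast)$ (the true scores being taken distinct, so each $\sign(s_i^\ast-s_j^\ast)\in\{\pm1\}$), while $\utilde K^k$ is forced to climb up to it; when the two meet, the break condition $\utilde K^k=\widetilde K^k$ fires and the algorithm outputs exactly $K^\ast$.

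First I would turn order-consistency into an exact count. For $k\ge k_0$, $\vs^k$ induces the same ranking as $\vs^\ast$, hence $\sign(s_i^k-s_j^k)=\sign(s_i^\ast-s_j^\ast)$ for every pair $\{i,j\}$; so a comparison $Y_{ij}^\alpha$ points against $\vs^k$ (that is, $\sign(Y_{ij}^\alpha)\neq\sign(s_i^k-s_j^k)$) exactly when $Y_{ij}^\alpha\neq\sign(s_i^\ast-s_j^\ast)$, i.e.\ exactly when it is an outlier in the sense of~\eqref{outlier-def-2}. Thus the raw wrong-direction count at every step $k\ge k_0$ equals $K^\ast$. Since, as noted in the motivating discussion preceding Algorithm~\ref{alg:IRLS}, this count over-estimates $K^\ast$ for any least-squares score, the running minimum can never fall below $K^\ast$, and therefore the stored value satisfies $\widetilde K^k=K^\ast$ for all $k\ge k_0$.

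Next I would study $\{\utilde K^k\}$. By construction $\utilde K^k\le\widetilde K^k$ for every $k$, and at any step that is not the break step $\utilde K^k=\lceil\beta_2\utilde K^{k-1}\rceil\ge\beta_2\utilde K^{k-1}$, so $\utilde K^k$ grows at least geometrically with ratio $\beta_2>1$ until it meets the cap $\widetilde K^k$, and it never overshoots that cap. For $k\ge k_0$ the cap is $K^\ast$, so after finitely many further steps one has $\utilde K^k=\widetilde K^k=K^\ast$, which is precisely the break condition, and the returned value is $\hat K=\widetilde K^k=K^\ast$. The preceding Lemma guarantees that the whole procedure terminates in a bounded number of steps, so such a step is actually attained.

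The step I expect to be the main obstacle is synchronizing the two previous paragraphs, i.e.\ ruling out a premature stop: one must show that the first index at which $\utilde K^k=\widetilde K^k$ is not an earlier step at which $\widetilde K^k$ is still an inflated over-estimate strictly above $K^\ast$. Controlling this uses both ingredients together --- that $\widetilde K^k$ genuinely over-estimates $K^\ast$ throughout the run, so no early coincidence can undershoot, and that the growth rate $\beta_2$ is close enough to $1$ that $\utilde K$ cannot overtake $\widetilde K$ before order-consistency sets in at $k_0$ --- so that the first coincidence of the two sequences occurs at a step $\ge k_0$, where their common value is $K^\ast$. The remaining pieces, the sign bookkeeping and the geometric-growth estimate (essentially the same computation as in the Lemma), are routine.
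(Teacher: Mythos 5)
Your argument is essentially the paper's own proof: order-consistency for $k\ge k_0$ forces the raw wrong-direction count at those steps to equal $K^\ast$, so $\widetilde K^k=K^\ast$ from then on, and Lemma~1 supplies finite termination with $\utilde K^k$ climbing geometrically until it meets that cap and triggers the break. You are in fact more careful than the paper, whose two-line proof silently passes over exactly the issue you flag as the ``main obstacle'' --- that an early iterate's raw count could pull the running minimum below $K^\ast$, or that the coincidence $\utilde K^k=\widetilde K^k$ could occur before $k_0$ --- and resolves neither beyond the heuristic that $\widetilde K$ always over-estimates the outlier number.
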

\begin{proof}
As $\vs^k$ is an order-consistent solution of the ground-truth, by definition, $\widetilde{K}^k$ gives the correct number of outliers, say $K^\ast$. It actually holds for all $k\geq k_0$, that $\widetilde{K}^k= K^\ast$. From Lemma 1, the claim follows.
\end{proof}

\begin{remark}
One scenario is the generalized linear model where $p(i\succeq j) = f(s_i^\ast - s_j^\ast)$ for some cumulate distribution function $f$ symmetric w.r.t. $f(0)=1/2$. With a large enough sample, all the pairwise preferences in the minority direction can be regarded as ``outliers" and dropping such outliers will not change the order consistency of least square estimators.
\end{remark}

Note that Theorem \ref{thm:alts} does not require $\Lambda^k$ to correctly identify the outliers, but just stable estimator $\vs^k$ to be order-consistent to $\vs^\ast$. In practice, this might not be satisfied easily. But, as we shall see in the next section, Algorithm \ref{alg:IRLS} typically returns stable estimators that only deviate locally.

\begin{figure}[t]
\setlength{\abovecaptionskip}{0pt}
 \begin{center}
 \subfigure [SN=1000]{
\includegraphics[width=0.31\linewidth]{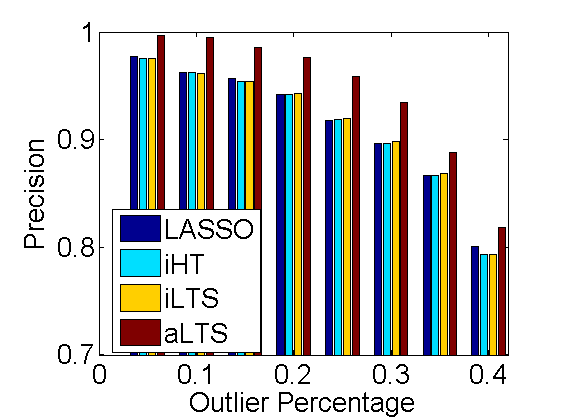}}
 \subfigure[SN=2000]{
\includegraphics[width=0.31\linewidth]{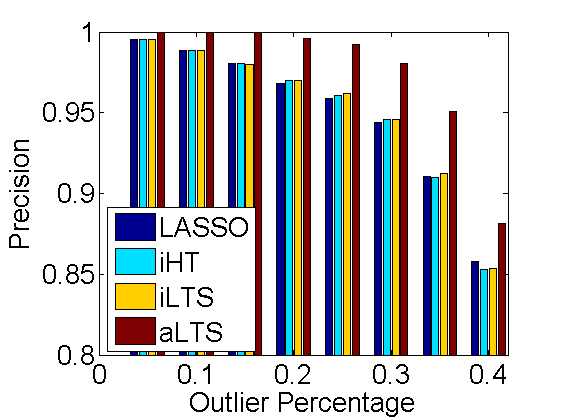}}
 \subfigure[SN=3000]{
\includegraphics[width=0.31\linewidth]{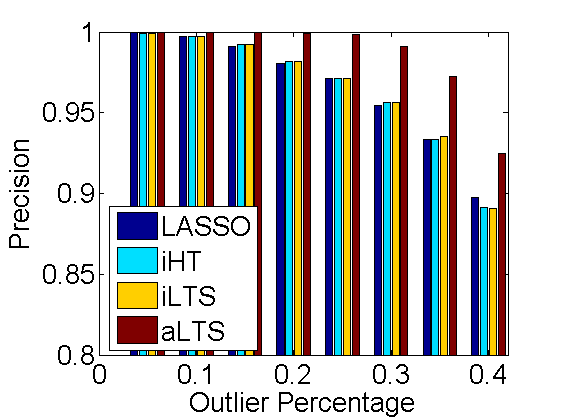}}
\renewcommand{\captionfont}{\scriptsize \bfseries}
\caption{\textbf{\emph{Precision}}s for simulated data via LASSO, iHT, iLTS, and aLTS, 100 times repeat.} \label{fig:precision}
\end{center}
\end{figure}

\begin{figure}[t]
\setlength{\abovecaptionskip}{0pt}
 \begin{center}
 \subfigure [SN=1000]{
\includegraphics[width=0.31\linewidth]{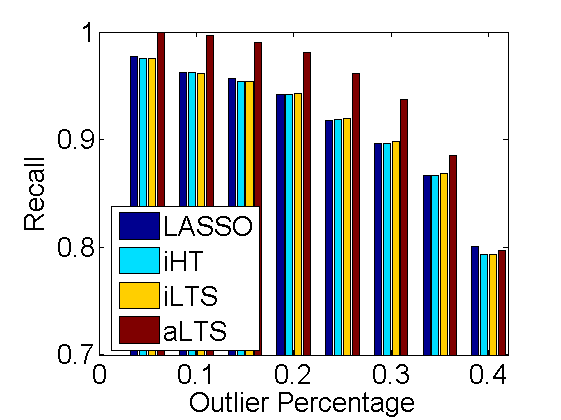}}
 \subfigure[SN=2000]{
\includegraphics[width=0.31\linewidth]{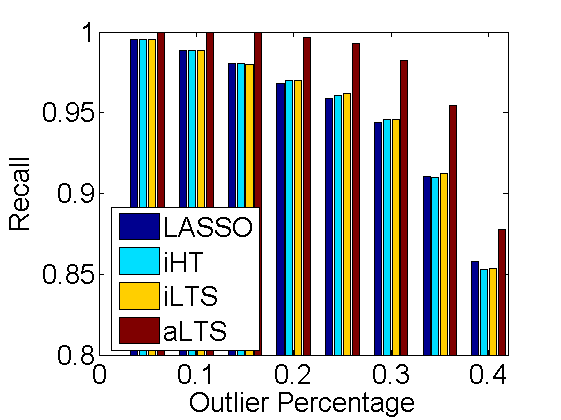}}
 \subfigure[SN=3000]{
\includegraphics[width=0.31\linewidth]{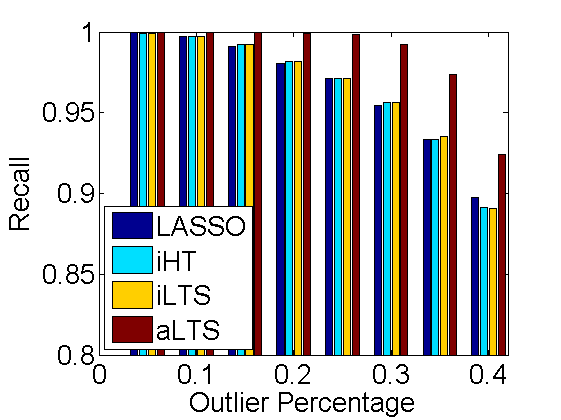}}
\renewcommand{\captionfont}{\scriptsize \bfseries}
\caption{\textbf{\emph{Recall}}s for simulated data via LASSO, iHT, iLTS, and aLTS, 100 times repeat.} \label{fig:recall}
\end{center}
\end{figure}

\section{EXPERIMENTS}\label{sec:experiments}

A key question in the outlier detection community is how to evaluate the effectiveness of outlier detection algorithms when the ground-truth outliers are not available. In this section, we will first show the effectiveness of the proposed method on simulated data with known ground-truth outliers, followed by real-world crowdsourcing datasets without ground-truth outliers.

\subsection{Simulated Data}
The simulated data is constructed as follows. A random total order on $n$ items is created as the ground-truth order. Then we add paired comparison edges $(i,j)$ randomly with preference directions following the ground-truth order. We simulate the outliers by randomly choosing a portion of the comparison edges and reversing them in preference directions. A paired comparison graph with outliers, possibly incomplete and imbalanced, is constructed.

Here we choose $n=16$, which is consistent with the real-world datasets, and make the following definitions for the experimental parameters. The total number of paired comparisons occurred on the graph is {\bf SN} (Sample Number), and the number of outliers is {\bf ON} (Outlier Number). Then the outlier percentage {\bf OP} can be obtained as {\bf ON}/{\bf SN}.

Most outlier detection algorithms adopt a tuning parameter (say \emph{t}) in order to select different number of data samples as outliers~\cite{MM13}, and the number of outliers detected changes as $t$ changes. If \emph{t} is picked too restrictively, then the algorithm will miss true outlier (false negatives). On the other hand, if the algorithm declares too many data samples as outliers, then it will lead to too many false positives. This tradeoff can be measured in terms of \textbf{\emph{precision}} and \textbf{\emph{recall}}, which are commonly used for measuring the effectiveness of outlier detection methods. Specifically, the \textbf{\emph{precision}} is defined as the percentage of reported outliers that truly turn out to be outliers; and the \textbf{\emph{recall}} is correspondingly defined as the percentage of ground-truth outliers that have been reported as outliers.

\begin{figure}[t]
\setlength{\abovecaptionskip}{0pt}
 \begin{center}
 \subfigure [SN=1000]{
\includegraphics[width=0.31\linewidth]{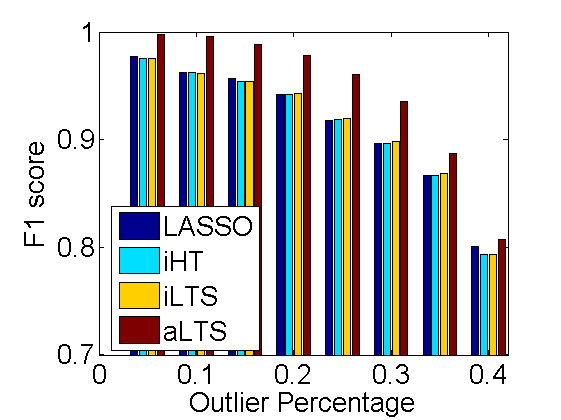}}
 \subfigure[SN=2000]{
\includegraphics[width=0.31\linewidth]{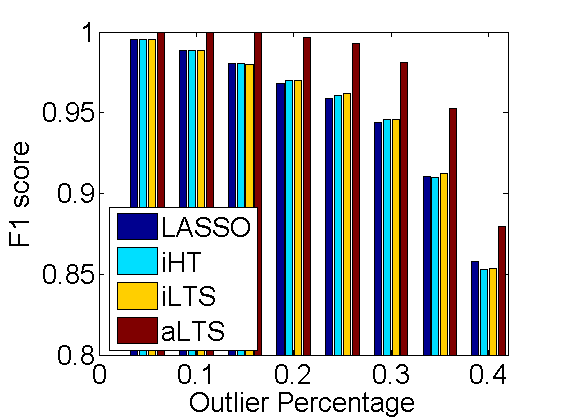}}
 \subfigure[SN=3000]{
\includegraphics[width=0.31\linewidth]{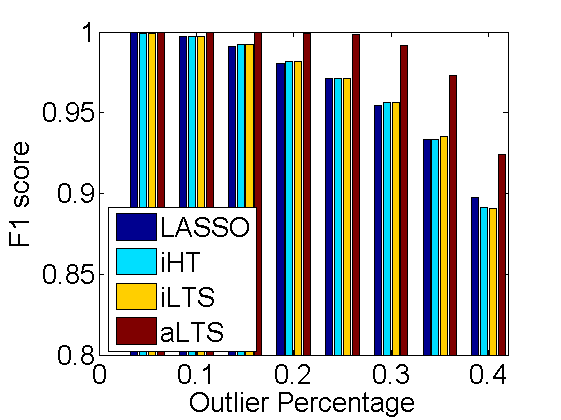}}
\renewcommand{\captionfont}{\scriptsize \bfseries}
\caption{\textbf{\emph{F1}} scores for simulated data via LASSO, iHT, iLTS, and aLTS, 100 times repeat.} \label{fig:f1}
\end{center}
\end{figure}

We then compare LASSO, iHT, iLTS, and aLTS for outlier detection on the simulated data. For ease of comparison, here we should tell LASSO, iHT, and iLTS in advance the exact number of outliers existed in the dataset. Because, different from aLTS, these three methods can not estimate the number of outliers in the dataset automatically.

The mean \textbf{\emph{precision}}s, \textbf{\emph{recall}}s, and \textbf{\emph{F1-score}}s over 100 runs for these four methods on different choices of {\bf SN} and {\bf OP} are shown in Figures~\ref{fig:precision},~\ref{fig:recall}, and ~\ref{fig:f1}. \textbf{\emph{F1-score}} is a combined measure that assesses the \textbf{\emph{precision}}/\textbf{\emph{recall}} tradeoff, which reaches its best value at 1 and worst score at 0.

It is easy to see that the performances of LASSO, iHT, and iLTS are very similar, while aLTS could produce better performance (indicated by higher \textbf{\emph{precision}}s, \textbf{\emph{recall}}s, and \textbf{\emph{F1-score}}s in almost all cases).
In addition, we compare the computing time required for these four methods to finish all the 100 runs in
Tables~\ref{runtime}. All computation is done using MATLAB R2014a, on a Mac Pro desktop PC, with 2.8 GHz Intel Core i7-4558u, and 16 GB memory.
It is easy to see that on the simulated dataset, iHT, iLTS, and aLTS algorithms are much faster than LASSO, which implies their advantages in dealing with large-scale data.
Specifically, iHT and iLTS can achieve up to about 30--90 times faster than LASSO, and aLTS is almost 3--8 times faster than the time for LASSO. As aLTS does not have any information about the number of outliers existed in the dataset and should estimate the number of outliers automatically, its computation cost is reasonably more expensive compared with iHT and iLTS.

\subsection{Real-world Data}\label{sec:VQA}

%
%\begin{figure}[h]
%\setlength{\abovecaptionskip}{0pt}
%%\renewcommand{\captionfont}{\scriptsize \bfseries}
% \begin{center}
% \subfigure [ ]{
%\includegraphics[width=0.15\columnwidth]{V1.png}}
% \subfigure[ ]{
%\includegraphics[width=0.15\columnwidth]{V2.png}}
% \subfigure[ ]{
%\includegraphics[width=0.15\columnwidth]{V3.png}}
% \subfigure[ ]{
%\includegraphics[width=0.15\columnwidth]{V4.png}}
% \subfigure[ ]{
%\includegraphics[width=0.15\columnwidth]{V5.png}}
% \subfigure[ ]{
%\includegraphics[width=0.15\columnwidth]{V6.png}}
% \subfigure[ ]{
%\includegraphics[width=0.15\columnwidth]{V7.png}}
% \subfigure[ ]{
%\includegraphics[width=0.15\columnwidth]{V8.png}}
% \subfigure[ ]{
%\includegraphics[width=0.15\columnwidth]{V9.png}}
% \subfigure[ ]{
%\includegraphics[width=0.15\columnwidth]{V10.png}}
%
%   \caption{Reference videos in LIVE database.} \label{datavideos}
%\end{center}
%\end{figure}

Two crowdsourcing real-world datasets are adopted in this subsection. Since there is no ground-truth for outliers in real-world datasets, we can not compute~\textbf{\emph{precision}} and~\textbf{\emph{recall}} as in the simulated data to evaluate the performance of the methods. Therefore, we inspect the outliers returned by four methods and compare them with the whole data to see whether they are reasonably good outliers or not.

%\begin{figure*}
% \begin{center}
%\includegraphics[width=\linewidth]{ref1.png}
%  \caption{Top 18.65\% outliers for reference (a) in PC-VQA dataset. The integer on each curve represents $a_{ij}$ defined in subsection~\ref{sec:VQA}. The detection results of Adaptive and LASSO is different on the pair with red curve. } \label{data5}
%\end{center}
%\end{figure*}
%%%%%%%%%%%%%%%%%%%%%%%%%%%%%%%%%%%%%%%%%%%%%%%%%%%%%%5

The first dataset PC-VQA, which is collected by~\cite{MM11}, contains 38,400 pairwise comparisons of the LIVE dataset~\cite{LIVE} from 209 random raters. The paired comparison data in this dataset is complete and balanced. Take reference (a) in the PC-VQA dataset as an illustrative example (other reference videos exhibit similar results). The number of outliers estimated by aLTS is used for LASSO/iHT/iLTS to choose the regularization parameter and select the outliers.

{\renewcommand\baselinestretch{1.1}\selectfont

\begin{table} [t] \renewcommand{\captionfont}{\scriptsize \bfseries}
\caption{\label{runtime} Computing time for 100 runs in total on simulated data via LASSO, iHT, iLTS,  and aLTS.}
\tiny
\centering

\subtable[LASSO]{
\newsavebox{\tablebox}
\begin{lrbox}{\tablebox}
    \begin{tabular}{c||p{0.7cm}p{0.7cm}p{0.7cm}p{0.7cm}p{0.7cm}p{0.7cm}p{0.7cm}p{0.7cm}}
 \hline  time (s)   &\textbf{OP=5\%}  &\textbf{OP=10\%} &\textbf{OP=15\%} &\textbf{OP=20\%} &\textbf{OP=25\%} &\textbf{OP=30\%} &\textbf{OP=35\%} &\textbf{OP=40\%} \\
 \hline
 \hline  \textbf{SN=1000}    &18.62   &19.72   &22.51   &23.80   &23.48   &22.56   &21.05   &18.56 \\
 \hline  \textbf{SN=2000}    &20.58   &29.21   &33.17   &34.81   &34.57   &31.54   &29.82   &25.78	  \\
  \hline  \textbf{SN=3000}   &28.59   &37.50   &40.62   &40.88   &41.60   &38.91   &34.94   &29.38	 \\
 \hline
 \end {tabular}
 \end{lrbox}
\scalebox{0.8}{\usebox{\tablebox}}
       \label{tab:lasso}
}

\subtable[iHT]{
\begin{lrbox}{\tablebox}
      \begin{tabular}{c||p{0.7cm}p{0.7cm}p{0.7cm}p{0.7cm}p{0.7cm}p{0.7cm}p{0.7cm}p{0.7cm}}
 \hline  time (s)   &\textbf{OP=5\%}  &\textbf{OP=10\%} &\textbf{OP=15\%} &\textbf{OP=20\%} &\textbf{OP=25\%} &\textbf{OP=30\%} &\textbf{OP=35\%} &\textbf{OP=40\%} \\
 \hline
 \hline  \textbf{SN=1000}     &0.23    &0.20    &0.24    &0.27    &0.31    &0.35    &0.40    &0.43 \\
 \hline  \textbf{SN=2000}    &0.29    &0.33    &0.40    &0.48    &0.50    &0.57    &0.65    &0.72	  \\
  \hline  \textbf{SN=3000}  &0.41    &0.48    &0.55    &0.60    &0.69    &0.79    &0.83    &0.97  \\
   %\hline  \textbf{SN=4000}  &1158.67	&1256.82	&1305.28	&1227.76	&1161.78	&1087.81	&1016.97	&1035.82	 \\
%    \hline  \textbf{SN=5000}  &1288.02	&1375.14	&1368.75	&1256.89	&1228.56	&1104.32	&992.46	&976.06	 \\
 \hline
 \end {tabular}
  \end{lrbox}
\scalebox{0.8}{\usebox{\tablebox}}
       \label{tab:iHT}
}

\subtable[iLTS]{
\begin{lrbox}{\tablebox}
     \begin{tabular}{c||p{0.7cm}p{0.7cm}p{0.7cm}p{0.7cm}p{0.7cm}p{0.7cm}p{0.7cm}p{0.7cm}}
 \hline  time (s)   &\textbf{OP=5\%}  &\textbf{OP=10\%} &\textbf{OP=15\%} &\textbf{OP=20\%} &\textbf{OP=25\%} &\textbf{OP=30\%} &\textbf{OP=35\%} &\textbf{OP=40\%} \\
 \hline
 \hline  \textbf{SN=1000}   &0.27    &0.24    &0.30    &0.33    &0.37    &0.40    &0.43    &0.45 \\
 \hline  \textbf{SN=2000}  &0.37    &0.41    &0.53    &0.60    &0.63    &0.73    &0.79    &0.83  \\
  \hline  \textbf{SN=3000}  &0.53    &0.63    &0.70    &0.78    &0.90    &1.03    &0.99    &1.12	  \\
   %\hline  \textbf{SN=4000}  &1158.67	&1256.82	&1305.28	&1227.76	&1161.78	&1087.81	&1016.97	&1035.82	 \\
%    \hline  \textbf{SN=5000}  &1288.02	&1375.14	&1368.75	&1256.89	&1228.56	&1104.32	&992.46	&976.06	 \\
 \hline
 \end {tabular}
  \end{lrbox}
\scalebox{0.8}{\usebox{\tablebox}}
       \label{tab:iHT}
}

\subtable[aLTS]{
\begin{lrbox}{\tablebox}
     \begin{tabular}{c||p{0.7cm}p{0.7cm}p{0.7cm}p{0.7cm}p{0.7cm}p{0.7cm}p{0.7cm}p{0.7cm}}
 \hline  time (s)   &\textbf{OP=5\%}  &\textbf{OP=10\%} &\textbf{OP=15\%} &\textbf{OP=20\%} &\textbf{OP=25\%} &\textbf{OP=30\%} &\textbf{OP=35\%} &\textbf{OP=40\%} \\
 \hline
 \hline  \textbf{SN=1000}    &4.86    &3.50    &3.13    &2.79    &3.00    &2.93    &2.85   &2.81 \\
 \hline  \textbf{SN=2000}    &6.36    &5.35    &4.97    &4.91    &4.75    &4.42    &4.29    &4.27	  \\
  \hline  \textbf{SN=3000}  &7.96    &7.61    &6.81    &6.70    &6.34    &6.09    &5.51    &5.92	  \\
   %\hline  \textbf{SN=4000}  &1158.67	&1256.82	&1305.28	&1227.76	&1161.78	&1087.81	&1016.97	&1035.82	 \\
%    \hline  \textbf{SN=5000}  &1288.02	&1375.14	&1368.75	&1256.89	&1228.56	&1104.32	&992.46	&976.06	 \\
 \hline
 \end {tabular}
  \end{lrbox}
\scalebox{0.8}{\usebox{\tablebox}}
       \label{tab:iHT}
}

\end{table}
\par}

{\renewcommand\baselinestretch{1.1}\selectfont
\setlength{\belowcaptionskip}{5pt}
\renewcommand{\captionfont}{\scriptsize \bfseries}
\begin{table}[ht]
\caption{\label{matrixs1} Paired comparison matrices of reference (a) in PC-VQA dataset. Red numbers are overlapping outliers obtained by LASSO, iHT, iLTS, and aLTS. Open blue circles are those obtained by LASSO/iHT/iLTS but not aLTS, while filled blue circles are those obtained by aLTS but not LASSO/iHT/iLTS.}
\scriptsize
\centering
\begin{lrbox}{\tablebox}
\begin{tabular}{|c|c|c|c|c|c|c|c|c|c|c|c|c|c|c|c|c|}
\hline Video ID    &\textbf{1} &\textbf{9} &\textbf{10} &\textbf{13} &\textbf{7} &\textbf{8} &\textbf{11} &\textbf{14} &\textbf{15} &\textbf{3} &\textbf{12} &\textbf{4} &\textbf{16} &\textbf{5} &\textbf{6} &\textbf{2} \\
\hline \textbf{1}    & 0   &  22    & 29    & 30    & 30    & 29    & 29    & 29    & 30    & 28    & 29    & 32    & 32    & 31   &  32   &  31 \\
\hline \textbf{9}     &\red{10}    &  0   & 22    & 20    & 14    & 23    & 23    & 25    & 29    & 29    & 32    & 30    & 29    & 30    & 29    & 31 \\
\hline \textbf{10}    &\red{3}    &\red{10}     & 0    & 22    & 11    & 21   &  29    & 23    & 31    & 27    & 31   &  30   &  32    & 30    & 32    & 31 \\
\hline \textbf{13}    &\red{2}     &\red{12}    &\red{10}     & 0    & 18   &  22   &  23    & 27   &  31   &  28    & 29   &  29    & 29    & 25    & 27    & 28 \\
\hline \textbf{7}    &\red{2}    &\red{18}   &\red{21}   &\red{14}    &  0   &  21    & 14    & 16   &  28    & 23    & 31    & 25   &  19   &  27    & 26    & 28 \\
\hline \textbf{8}   &\red{3}     &\red{9}   &\red{11}   &\red{10}    &\red{11}     & 0    & 25   &  14    & 28   &  25    & 29   &  27    & 24   &  25    & 28    & 32  \\
\hline \textbf{11}    &\red{3}     &\red{9}      &\red{3}     &\red{9}    &\red{18}     &\red{7}     & 0    & 22    & 27    & 26    & 26    & 30   &  30   &  27   &  27    & 31  \\
\hline \textbf{14}   &\red{3}     &\red{7}       &\red{9}      &\red{5}    &\red{16}   &\red{18}    &\red{10}    &  0    & 28    & 27    & 18   &  29   &  29   &  26    & 28   &  29 \\
\hline \textbf{15}    &\red{2}     &\red{3}     &\red{1}    &\red{1}     &\red{4}      &\red{4}      &\red{5}     &\red{4}     &  0    & 25   &  20    & 22    & 26    & 25   &  29    & 24 \\
\hline \textbf{3}    &\red{4}     &\red{3}    &\red{5}    &\red{4}     &\red{9}    &\red{7}      &\red{6}    &\red{5}      &\red{7}    &  0    &\red{11}   &\textcolor{white}{\pgftextcircledblk{\bf 15}}    & 26    & 24    & 29    & 28  \\
\hline \textbf{12}   &\red{3}    &  0    &\red{1}    &\red{3}     &\red{1}     &\red{3}     &\red{6}    &\red{14}     &\red{12}   &  21     & 0    & 16    & 20    & 24    & 26    & 26 \\
\hline \textbf{4}   & 0     &\red{2}     &\red{2}     &\red{3}     &\red{7}     &\red{5}     &\red{2}     &\red{3}     &\red{10}    & \textcolor{blue}{\pgftextcircled{\bf 17}}    &\red{16}    &  0   &  15    & 26    & 27    & 30 \\
\hline \textbf{16}   & 0     &\red{3}     & 0     &\red{3}    &\red{13}     &\red{8}     &\red{2}    &\red{3}     &\red{6}     &\red{6}   &\red{12}    &\red{17}     & 0   &  22    & 24    & 28 \\
\hline \textbf{5}  &\red{1}    &\red{2}    &\red{2}     &\red{7}    &\red{5}     &\red{7}     &\red{5}   &\red{6}    &\red{7}     &\red{8}    &\red{8}    &\red{6}    &\red{10}    & 0   &  26    & 27  \\
\hline \textbf{6}   & 0     &\red{3}    &  0     &\red{5}     &\red{6}     &\red{4}     &\red{5}   &\red{4}    &\red{3}     &\red{3}     &\red{6}      &\red{5}    &\red{8}   &\red{6}    &  0   &  21  \\
\hline \textbf{2}    &\red{1}    &\red{1}    &\red{1}    &\red{4}     &\red{4}     & 0     &\red{1}     &\red{3}     &\red{8}     &\red{4}    &\red{6}      &\red{2}      &\red{4}     &\red{5}    &\red{11}    & 0 \\
\hline
\end {tabular}
  \end{lrbox}
\scalebox{0.65}{\usebox{\tablebox}}
\end{table}
 \par}

Outliers detected by these methods are shown in the paired comparison matrix in Table~\ref{matrixs1}. The paired comparison matrix is constructed as follows (Table~\ref{matrixs2} is constructed in the same way). For each video pair $\{i,j\}$, let $n_{ij}$ be the number of comparisons for items $i$ and $j$, among which $a_{ij}$ raters agree that the quality of item $i$ is better than item $j$ ($a_{ji}$ carries the opposite meaning). So $a_{ij} + a_{ji} = n_{ij}$ if no tie occurs. In the PC-VQA dataset, $n_{ij}=32$ for any video pair $\{i,j\}$. The order of the video IDs in the matrix is arranged such that the global ranking score calculated by the least squares problem with all the comparisons is decreasing (from high to low). The number of outliers estimated by aLTS from this reference video is 716. So we choose the parameter for LASSO/iHT/iLTS to detect 716 outliers, and the exact number of outliers returned by LASSO/iHT/iLTS is 718, which is slightly larger than 716.

The outliers detected by these methods are mainly distributed in the lower left corner of this matrix, which implies that the outliers are those comparisons with large deviations from the global ranking scores by LS. It is easy to see that outliers returned by LASSO, iHT, iLTS, and aLTS are almost the same except on one pair (ID = 3 and ID = 4). In this dataset, 15 raters agree that the quality of ID = 3 is better than that of ID = 4, while 17 raters have the opposite opinion. LASSO, iHT, and iLTS return the same results which tend to choose comparisons with large deviations from the global ranking scores as outliers. So these three treat the 17 comparisons preferring ID = 4 as outliers because ID = 3 ranks above ID = 4. However, aLTS prefers to choose the minority as outliers and treats the 15 comparisons preferring ID = 3 as outliers. Such a small difference only leads to a local order change of ID = 3 and ID = 4. Therefore the ranking algorithms are stable.

{\renewcommand\baselinestretch{1.1}\selectfont
\setlength{\belowcaptionskip}{5pt}
\renewcommand{\captionfont}{\scriptsize \bfseries}
\begin{table}[t]
\caption{\label{matrixs2} Paired comparison matrices of reference (c) in PC-IQA dataset. Red numbers, open blue circles, and filled blue circles carry the same meanings as in Table~\ref{matrixs1}.}
\scriptsize
\centering
\begin{lrbox}{\tablebox}
\begin{tabular}{|c|c|c|c|c|c|c|c|c|c|c|c|c|c|c|c|c|}
\hline \textbf{Image ID}      &\textbf{1} &\textbf{8} &\textbf{16} &\textbf{2} &\textbf{3} &\textbf{11} &\textbf{6} &\textbf{12} &\textbf{9} &\textbf{14} &\textbf{5} &\textbf{13} &\textbf{7} &\textbf{10} &\textbf{15 }&\textbf{4} \\
\hline \textbf{1}    &0	&13	&9	&16	&19	&12	&15	&13	&14	&14	&14	&17	&16	&17	 &16	&16 \\
\hline \textbf{8}     &\red{6}	&0	&8	&7	&8	&5	&13	&7	&7	&8	&19	&8	 &15	&9	&12	&15 \\
\hline \textbf{16}    &\red{4}	&0	&0	&9	&11	&9	&8	&15	&3	&18	&16	&17	 &12	&7	&21	&18\\
\hline \textbf{2}    &\red{5}	&\red{5}	&\red{6}	&0	&8	&9	&10	&11	 &7	&14	 &13	&14	&14	&13	&14	 &15 \\
\hline \textbf{3}     &\red{3}	&\red{4}	&\red{6}	&\red{7}	&0	&6	 &11	&9	&10	&16	&12	 &15	&14	&14	&18	 &13 \\
\hline \textbf{11}    &\red{4}	&\red{6}	&\red{3}	&\red{5}	 &\red{6}	&0	 &\textcolor{white}{\pgftextcircledblk{\bf 5}}	&3	 &5	&6	 &21	 &5	 &11	 &7	 &12	 &18\\
\hline \textbf{6}   &0	&\red{2}	&\red{7}	&\red{4}	&\red{2}	 &\textcolor{blue}{\pgftextcircled{\bf 7}}	&0	 &12	&12	&7	 &22	&15	 &17	&13	&13	&17   \\
\hline \textbf{12}   &\red{3}	&\red{4}	&\red{1}	&\red{4}	 &\red{4}	 &\red{3}	&\red{1}	&0	 &8	 &15	 &18	&12	&9	 &8	&13	 &17 \\
\hline \textbf{9}   &\red{1}	&\red{3}	&\red{3}	&\red{5}	 &\red{1}	 &\red{3}	&\red{1}	&0	 &0	 &5	 &18	 &10	&14	&9	 &7	 &16  \\
\hline \textbf{14}   &0	&0	&\red{1}	&0	&0	&\red{3}	&\red{7}	 &\red{2}	 &\red{1}	&0	&14	&15	&10	&8	&17	 &19 \\
\hline \textbf{5}    &0	&0	&0	&0	&0	&0	&0	&0	&0	&\red{1}	&0	&14	 &19	&19	&15	&17 \\
\hline \textbf{13}   &0	&0	&0	&0	&0	&0	&0	&0	&0	&0	&\red{6}	&0	 &5	&7	&17	&16  \\
\hline \textbf{7}    &0	&0	&0	&0	&0	&0	&0	&0	&0	&0	&0	&\red{5}	 &0	&8	&9	&18 \\
\hline \textbf{10}    &0	&0	&0	&0	&0	&0	&0	&0	&0	&0	&0	 &\red{2}	&\red{2}	&0	 &\textcolor{white}{\pgftextcircledblk{\bf 3}}	 &11  \\
\hline \textbf{15}   &0	&0	&0	&0	&0	&0	&0	&0	&0	&0	&0	&0	&0	 &\textcolor{blue}{\pgftextcircled{\bf 5}}	&0	 &11 \\
\hline \textbf{4}  &0	&0	&0	&0	&0	&0	&0	&0	&0	&0	&0	&\red{1}	 &0	&\red{6}	 &\red{6}	&0\\\hline
\end {tabular}
  \end{lrbox}
\scalebox{0.71}{\usebox{\tablebox}}
\end{table}
 \par}

The global ranking scores of the four algorithms, namely LASSO, iHT, iLTS, and aLTS are shown in Table~\ref{tab:data5-rank}. For the ease of seeing the differences on global rating scores after outlier detection, we also report the results obtained by LS which has been used in~\cite{MM11,tmm12} to derive ranking scores in subjective multimedia assessments. After the detected outliers are removed, the orders of some competitive videos are changed. LASSO, iHT, iLTS, and aLTS all think that ID = 12 has better performance than ID = 3 and ID = 4. However, the orders of ID = 3 and ID = 4 are exchanged in aLTS and LASSO/iHT/iLTS, because they choose different preference directions as outliers.

%The effectiveness of iLTS has been demonstrated on a complete and balanced dataset, and we want to show the effectiveness of iLTS on incomplete and imbalanced datasets.

The second dataset PC-IQA~\cite {MM12} is incomplete and imbalanced. This dataset contains 15 reference images and 15 distorted versions of each reference image. So the total number of images is 240. These images come from two publicly available datasets: LIVE~\cite{LIVE} and IVC~\cite{IVC}. Totally, 186 raters, each of whom performs a varied number of comparisons via Internet, provide 23,097 pairwise comparisons.

Tables~\ref{matrixs2} and~\ref{tab:data10-rank} show the comparable experimental results of LASSO, iHT, iLTS, and aLTS on reference image (c) (other reference images exhibit similar results). The number of outliers estimated by aLTS is 173, so we choose the parameter of LASSO/iHT/iLTS to detect 173 outliers. The exact number of outliers returned by LASSO/iHT/iLTS is 177, which is slightly larger than 173. We can see that the difference of the detection between LASSO/iHT/iLTS and aLTS happens on two pairs: 1) ID = 6 and ID = 11; 2) ID = 10 and ID = 15. Same as in the last experiment, these methods differ in outlier detection for highly comparable pairs. aLTS prefers to choose the minority in paired comparisons, i.e., the 5 comparisons preferring ID = 11 over ID = 6 and the 3 comparisons preferring ID = 10 over ID = 15, while LASSO/iHT/iLTS selects comparisons with largest deviations from global ranking scores even when the votings are in majority. Such a difference leads to a local order change of involved items only.

{\renewcommand\baselinestretch{1.33}\selectfont

\begin{table} [t]
\renewcommand{\captionfont}{\scriptsize \bfseries}
\caption{\label{matrixs} Comparison of different rankings. Five ranking methods are compared with the integer representing the ranking position and the number in parentheses representing the global ranking score returned by the corresponding algorithm.}

\makeatletter\def\@captype{table}\makeatother
\begin{minipage}[t]{0.23\textwidth}
\scriptsize
\centering

\subtable[Ref (a) in the PC-VQA]{

\begin{lrbox}{\tablebox}
      \begin{tabular}{||c|c|c|c||}

      \hline\hline
    \textbf{ID} & \textbf{LS} & \textbf{LASSO/iHT/iLTS}  & \textbf{aLTS}\\
    \hline
 1 & 1 ( 0.7930 ) & 1 ( 0.9123 )  & 1 ( 0.9129 ) \\
9  & 2 ( 0.5312 ) & 2 ( 0.7537 )  & 2 ( 0.7539 ) \\
10 & 3 ( 0.4805 ) & 3 ( 0.6317 )  & 3 ( 0.6322 )  \\
13 & 4 ( 0.3906 ) & 4 ( 0.5522 )  & 4 ( 0.5524 )   \\
7 & 5 ( 0.2852 ) & 5 ( 0.4533 )  & 5 ( 0.4537 )   \\
8 & 6 ( 0.2383 )  & 6 ( 0.3159 )   & 6 ( 0.3163 )  \\
11 & 7 ( 0.2148 ) &  7 ( 0.2113 )  & 7 ( 0.2120 )   \\
14 & 8 (  0.1641 ) & 8  ( 0.1099 )  & 8 ( 0.1103 )  \\
15 & 9 ( -0.1758 ) & 9 ( -0.1024 ) & 9 ( -0.1029 )   \\
3 & 10 ( -0.2227 ) & \red{11 ( -0.3195 )} & \red{12 ( -0.3999 )}  \\
12 & 11 ( -0.2500 ) & \red{10 ( -0.2149 )}  & \red{10 ( -0.2158 )}   \\
4 & 12 ( -0.2930 ) & \red{12 ( -0.4054 )}  & \red{11 ( -0.3252 )}   \\
16 & 13 ( -0.3633 ) & 13 ( -0.5311 )  & 13 ( -0.5332 )   \\
5 & 14 ( -0.4414 ) & 14 ( -0.6573 )  & 14 ( -0.6568 )   \\
6 & 15 ( -0.6289 ) & 15 ( -0.8054 )  & 15 ( -0.8057 )   \\
2 & 16 ( -0.7227 ) & 16 ( -0.9046 )  & 16 ( -0.9042 )   \\
 \hline
    \hline
\end{tabular}
\end{lrbox}
\scalebox{0.55}{\usebox{\tablebox}}
       \label{tab:data5-rank}
}
\end{minipage}
\makeatletter\def\@captype{table}\makeatother
\begin{minipage}[t]{0.23\textwidth}
\scriptsize
\centering
\subtable[Ref (c) in the PC-IQA]{
\begin{lrbox}{\tablebox}
       \begin{tabular}{||c|c|c|c||}
     \hline
    \hline
    \textbf{ID} & \textbf{LS} & \textbf{LASSO/iHT/iLTS}  & \textbf{aLTS}\\
    \hline
 1 & 1 ( 0.7575 ) & 1 ( 0.9015 ) & 1 ( 0.9022 ) \\
8  & 2 ( 0.5670 ) & 2 ( 0.7088 )  & 2 ( 0.7129 ) \\
16 & 3 ( 0.5124 ) & 3 ( 0.6472 )  & 3 ( 0.6504 )  \\
2 & 4 ( 0.4642 ) & 4 ( 0.5242 )  & 4 ( 0.5248 )   \\
3 & 5 ( 0.4423 ) & 5 ( 0.4119 )  & 5 ( 0.4148 )   \\
11 & 6 ( 0.3277 ) &  6 ( 0.2592 )  &  \red{ 7 ( 0.1763 )}  \\
6 & 7 ( 0.3128 ) & 7 ( 0.2515 )  &  \red{ 6 ( 0.3124 )}   \\
12 & 8 ( 0.2423 ) & 8 ( 0.1209 )  & 8 ( 0.1261 )  \\
9 & 9 ( 0.1453 ) & 9 ( 0.0043 )  & 9 ( 0.0069 )   \\
14 & 10 ( -0.0455 ) & 10 ( -0.1274 )  & 10 ( -0.1243 )   \\
5 & 11 ( -0.3376 ) & 11 ( -0.3205 )  & 11 ( -0.3214 )   \\
13 & 12 ( -0.4785 ) & 12 ( -0.4621 )  & 12 ( -0.4560 )   \\
7 & 13 ( -0.5396 ) &  13 ( -0.5515 )  & 13 ( -0.5494 )  \\
10 & 14 ( -0.7486 ) &  14 ( -0.7005 )  & \red{ 15 ( -0.7485 )}   \\
15 & 15 ( -0.7658 ) & 15 ( -0.7511 )  & \red{ 14 ( -0.7106 )}    \\
4 & 16 ( -0.8559 ) & 16 ( -0.9163 )  & 16 ( -0.9166 )   \\
 \hline
    \hline

\end{tabular}
\end{lrbox}
\scalebox{0.55}{\usebox{\tablebox}}
\label{tab:data10-rank}

}
\end{minipage}
%\end{lrbox}
%\scalebox{0.8}{\usebox{\tablebox}}
\end{table}
\par}

\subsection{Discussion}
As we have seen in the numerical experiments, LASSO, iHT, iLTS, and aLTS mostly find the same outliers, and when they disagree, aLTS tends to choose the minority and LASSO/iHT/iLTS prefer to choose comparisons with large deviations from the global ranking scores even when the votings are in majority. When outliers consist of minority voting as in simulated experiments, aLTS performs better than LASSO, iHT, and iLTS. This can also be explained from the algorithm. We choose a small underestimation for the number of outliers, and increase this estimation until there is no outliers in the remaining comparisons. The parameter $\beta_2>1$ is chosen to be small so we will not overestimate the number of outliers too much.

%
%However, there are still some cases where the number is overestimated, and the overestimated number mainly comes from competitive pairs. Therefore, we can reduce $\beta_2$, which increases the number of iterations and the computing time. On the other hand, we can keep $\beta_2$ large and add one step to correct the competitive pairs (two successive items in the order), as explained in Remark~\ref{remark3}.

Finally, we would like to point out that subject-based outlier detection can be a straightforward extension from our proposed algorithms. From the detection results, one may evaluate the reliability of one rater based on all the comparisons from the rater and remove all the comparison from unreliable raters.

\section{CONCLUSIONS}\label{sec:conclusions}
In this paper, we proposed fast algorithms for outlier detection with nonconvex optimization and robust ranking in QoE evaluation.
Specifically, for known $K$, the proposed iHT and iLTS could provide us almost the same performance
compared with LASSO, and the computational speed can achieve up to 90 times faster than LASSO. For unknown $K$,
we proposed an adaptive method called aLTS which could estimate the number of outliers and detect them
without any prior information about the number of outliers in the dataset. This method is nearly 3--8
times faster than LASSO. The effectiveness and efficiency of the proposed methods is demonstrated on both simulated
examples and real-world applications. The small distinctions between these four methods indicate that aLTS prefers
to choosing minority voting data as outliers, while the LASSO, iHT, and iLTS select the comparisons with largest deviations
from the global ranking score as outliers even when they are in majority. In both cases, the global rankings obtained are
stable. In summary, we expect that the proposed outlier detection methods for QoE will be helpful tools for people in the multimedia community exploiting crowdsourceable paired comparison data for robust ranking.

%\section{Acknowledgments}
%
%The research of Qianqian Xu was supported in part by National Key Research and Development Plan (No.2016YFB0800403), National Natural Science Foundation of China (No. 61672514, 61422213, U1636214, 61390514, 61572042), Beijing Natural Science Foundation (4172068), Key Program of the Chinese Academy of Sciences (No. QYZDB-SSW-JSC003). The research of Ming Yan was supported in part by NSF grant under DMS-1621798. The research of Qingming Huang was supported in part by National Natural Science Foundation of China: U1636214, 61650202, 61332016 and 61620106009, in part by Key Research Program of Frontier Sciences, CAS: QYZDJ-SSW-SYS013. The research of Yuan Yao was supported in part by National Basic
%Research Program of China under grant 2015CB85600,
%2012CB825501, and NSFC grant 61370004, 11421110001
%(A3 project), as well as grants from Baidu and Microsoft
%Research-Asia.

% The following two commands are all you need in the
% initial runs of your .tex file to
% produce the bibliography for the citations in your paper.
\bibliographystyle{abbrv}

\bibliography{sigproc}  % sigproc.bib is the name of the Bibliography in this case

\newpage
\appendix
\section{Proofs}

\begin{proof}[(Proposition~\ref{prop:iht-equiv})]
First we prove $S_1 = S_2$. For any $\vs \in S_1$, there is $\vE$ such that $\|\vE\|_0 = K$ and $(\vs, \vE)$ is optimal for problem~\eqref{eq:iht}. Then for any $(\vs', \vE')$ such that $\|\vE'\|_0\le K$, since
\begin{align*}
\frac{1}{2} \|\vY - \vX \vs - \vE\|_2^2 + \lambda \|\vE\|_0 \le \frac{1}{2} \|\vY - \vX \vs' - \vE'\|_2^2 + \lambda \|\vE'\|_0
\end{align*}
and $\|\vE'\|_0 \le K = \|\vE\|_0$, we have
\begin{align*}
\frac{1}{2} \|\vY - \vX \vs - \vE\|_2^2 \le \frac{1}{2} \|\vY - \vX \vs' - \vE'\|_2^2.
\end{align*}
Hence $(\vs,\vE)$ is optimal for problem~\eqref{eq:iht2}, i.e. $\vs \in S_2$.

For any $\vs \in S_2$, there is $\vE$ such that $\|\vE\|_0 \le K$ and $(\vs, \vE)$ is optimal for problem~\eqref{eq:iht2}. Then for the pre-chosen $(\tilde\vs, \tilde\vE)$, since
\begin{align*}
\frac{1}{2} \|\vY - \vX \vs - \vE\|_2^2 \le \frac{1}{2} \|\vY - \vX \tilde \vs - \tilde \vE\|_2^2
\end{align*}
and $\|\vE\|_0 \le K = \|\tilde\vE\|_0$, we sum them up to get
\begin{align*}
\frac{1}{2} \|\vY - \vX \vs - \vE\|_2^2 + \lambda \|\vE\|_0 \le \frac{1}{2} \|\vY - \vX \tilde \vs - \tilde \vE\|_2^2 + \lambda \|\tilde \vE\|_0.
\end{align*}
Note that $(\tilde\vs, \tilde\vE)$ is optimal for problem~\eqref{eq:iht}, hence equality must hold, i.e.
\begin{align*}
\frac{1}{2} \|\vY - \vX \vs - \vE\|_2^2 &= \frac{1}{2} \|\vY - \vX \tilde \vs - \tilde \vE\|_2^2,\\
\|\vE\|_0 & = \|\tilde\vE\|_0 = K.
\end{align*}
Hence $(\vs,\vE)$ is optimal for problem~\eqref{eq:iht} as well as $\|\vE\|_0 = K$, i.e. $\vs \in S_1$. Thus $S_1 = S_2$.

Then we prove $S_2 = S_3$. For any $\vs\in S_2$, there is $\vE$ such that $\|\vE\|_0 \le K$ and $(\vs, \vE)$ is optimal for problem~\eqref{eq:iht2}. Since $\vE$ is optimal for $\vs$, it is easy to know that the index of nonzero entries of $\vE$ is contained in $J$ which is the index of $K$ entries of $\vY - \vX \vs$ with largest squares. Let $\Lambda\in \{0,1\}^N$ satisfying $\Lambda_J = 1_{N-K}$ and $\Lambda_{J^c} = 0_K$. For any $(\vs', \Lambda')$ such that $\|\Lambda'\|_0\ge N - K$, let $\vE' = (1_N - \Lambda')\circ (\vY - \vX \vs')$, then $\|\vE'\|_0 \le K$ and hence
\begin{align*}
& \frac{1}{2} \|\Lambda' \circ (\vY - \vX \vs')\|_2^2 = \frac{1}{2} \|\vY - \vX \vs' - \vE' \|_2^2\\
\ge\ & \frac{1}{2} \|\vY - \vX \vs - \vE\|_2^2 = \frac{1}{2} \|\Lambda \circ (\vY - \vX \vs)\|_2^2.
\end{align*}
Hence $(\vs,\Lambda)$ is optimal for problem~\eqref{eq:ho_rank_aop}, i.e. $\vs \in S_3$.

For any $\vs\in S_3$, there is $\Lambda$ such that $\|\Lambda\|_0 \ge N - K$ and $(\vs, \Lambda)$ is optimal for problem~\eqref{eq:ho_rank_aop}. For any $(\vs', \Lambda')$ such that $\|\Lambda'\|_0 \ge N - K$, let $\vE' = (1_N - \Lambda')\circ (\vY - \vX \vs')$ and $\vE = (1_N - \Lambda)\circ (\vY - \vX \vs)$, then $\|\vE'\|_0, \|\vE\|_0\le K$ and
\begin{align*}
& \frac{1}{2} \|\vY - \vX \vs' - \vE' \|_2^2 = \frac{1}{2} \|\Lambda' \circ (\vY - \vX \vs')\|_2^2\\
\ge\ & \frac{1}{2} \|\Lambda \circ (\vY - \vX \vs)\|_2^2 \ge \frac{1}{2} \|\vY - \vX \vs - \vE\|_2^2.
\end{align*}
Hence $(\vs,\vE)$ is optimal for problem~\eqref{eq:iht2}, i.e. $\vs \in S_2$. Thus $S_2 = S_3$.
\end{proof}

\begin{proof}[(Theorem~\ref{thm:iht-cons})]
Note that
\begin{align*}
\vE^{k+1} &= \Proj_K ((\vI_N - \vH)\vY + \vH \vE^k)\\
& = \mathrm{argmin}_{\|\vE\|_0\le K} \| (\vI_N - \vH)\vY + \vH \vE^k - \vE \|_2^2,
\end{align*}
and $\|\vE^*\|_0 = K^* \le K$, we have
\begin{align*}
& \|(\vI_N - \vH)\vY + \vH\vE^k - \vE^*\|_2^2\\
\ge\ & \|(\vI_N - \vH)\vY + \vH \vE^k - \vE^{k+1}\|_2^2\\
=\ & \|((\vI_N - \vH)\vY + \vH\vE^k - \vE^*) + (\vE^* - \vE^{k+1})\|_2^2.
\end{align*}
Expanding the right hand side and simple calculations imply
\begin{align*}
\|\vE^{k+1} - \vE^*\|_2^2\le 2 ((\vI_N - \vH)\vY + \vH\vE^k - \vE^*)^T (\vE^{k+1} - \vE^*).
\end{align*}
Plug $\vY = \vX\vs^* + \vN^* + \vE^*$ in and note that $(\vI_N - \vH) \vX = 0$,
the above right hand side becomes
\begin{align*}
2(\vE^k - \vE^*)^T \vH (\vE^{k+1} - \vE^*) + 2\vN^{*T}(\vI_N - \vH)(\vE^{k+1} - \vE^*).
\end{align*}
Let $J_k = \supp(\vE^*)\cup\supp(\vE^k)\cup\supp(\vE^{k+1})$, then $|J_k|\le 3K$ and
\begin{align*}
& (\vE^k - \vE^*)^T \vH (\vE^{k+1} - \vE^*)\\
=\ & (\vE_{J_k}^k - \vE_{J_k}^*)^T \vH_{J_k, J_k} (\vE_{J_k}^{k+1} - \vE_{J_k}^*)\\
\le\ & \theta \cdot \|\vE^k - \vE^*\|_2 \cdot \|\vE^{k+1} - \vE^*\|_2.
\end{align*}
Besides, since $\vI_N - (\vI_N - \vH)$ is positive semi-definite, we know $\|\vI_N - \vH\|_2\le 1$ and $\| (\vI_N - \vH) \vN^* \|_2 \le \|\vN^*\|_2$. Combining the above analysis, we obtain
\begin{align*}
& \|\vE^{k+1} - \vE^*\|_2 \le 2\theta \cdot \|\vE^k - \vE^*\|_2 + 2\|\vN^*\|_2,
\end{align*}
from which we can prove~\eqref{eq:iht-cons-E} by induction.

Moreover, if~\eqref{eq:iht-cons-N} holds, according to~\eqref{eq:iht-cons-E} we know that for sufficiently large $k$,
\begin{align*}
\|\vE^k - \vE^*\|_{\infty} \le \|\vE^k - \vE^*\|_2 < \vE_{\min}^*,
\end{align*}
which implies $\supp(\vE^k) \supseteq \supp(\vE^*)$. When $K = K^*$ additionally, due to the fact that $\|\vE^k\|_0 \le K = K^* = \|\vE^*\|_0$, we have $\supp(\vE^k) = \supp(\vE^*)$.
\end{proof}

\begin{proof}[(Theorem~\ref{thm:lts-conv})]
For any $\vs$, Let
\begin{align*}
\tau(\vs) = (\tau_1(\vs), \ldots, \tau_N(\vs))^T = (\vY - \vX \vs)\circ (\vY - \vX \vs),
\end{align*}
and $\tau_{(N-K+1)}(\vs)$ be the $(N-K+1)$th smallest ($K$th largest) value of entries of $\tau(\vs)$. $\tau_1(\vs), \ldots, \tau_N(\vs)$ and $\tau_{(N-K+1)}(\vs)$ are continuous functions of $\vs$, thus we can find a sufficiently small $\epsilon > 0$ such that for any $\vs$ satisfying $\|\vs - \vs^k \|_2 < \epsilon$,
\begin{equation*}
\left\{
\begin{array}{rl}
\tau_l(\vs) < \tau_{(N-K+1)}(\vs),& \textnormal{if}\ \tau_l(\vs^k) < \tau_{(N-K+1)}(\vs^k),\\
\tau_l(\vs) > \tau_{(N-K+1)}(\vs),& \textnormal{if}\ \tau_l(\vs^k) > \tau_{(N-K+1)}(\vs^k).
\end{array}
\right.
\end{equation*}
Now, for any given $\vs$ satisfying $\|\vs - \vs^k\|_2 < \epsilon$, we can find an optimal $\bar\Lambda$ for $\vs$, so that $\|\bar\Lambda\|_0 = N - K$. Such a $\bar\Lambda$ satisfies
\begin{equation*}
\left\{
\begin{array}{rl}
\bar\Lambda_l = 1,& \tau_l(\vs) < \tau_{(N-K+1)}(\vs),\\
\bar\Lambda_l = 0,& \tau_l(\vs) > \tau_{(N-K+1)}(\vs).
\end{array}
\right.
\end{equation*}
Hence
\begin{equation*}
\left\{
\begin{array}{rl}
\bar\Lambda_l = 1,& \tau_l(\vs^k) < \tau_{(N-K+1)}(\vs^k),\\
\bar\Lambda_l = 0,& \tau_l(\vs^k) > \tau_{(N-K+1)}(\vs^k),
\end{array}
\right.
\end{equation*}
which implies that $\bar\Lambda$ is optimal not only for $\vs$, but also for $\vs^k$. Because the algorithm stops at $\vs^k$, we know $\bar\Lambda$ must have appeared before, i.e. there is $j\le k$ such that $\bar\Lambda = \Lambda^j$, and
\begin{align*}
F(\vs^k, \Lambda^j) \le F(\vs^k, \Lambda^k).
\end{align*}
Note that $\vs^j$ is optimal for $\Lambda^j$, from the definition of the algorithm. Thus
\begin{align*}
F(\vs^k, \Lambda^j) \ge F(\vs^j, \Lambda^j).
\end{align*}
Because $F$ is non-increasing during the procedure of the algorithm, we have $F(\vs^j, \Lambda^j) \ge F(\vs^k, \Lambda^k)$. Combining it with the above inequalities, we obtain
\begin{align*}
F(\vs^j, \Lambda^j) = F(\vs^k, \Lambda^j) = F(\vs^k, \Lambda^k).
\end{align*}
So not only $\vs^j$ but also $\vs^k$ is optimal for $\Lambda^j$, we have then
\begin{align*}
E(\vs^k) = F(\vs^k, \Lambda^j) \le F(\vs, \Lambda^j) = F(\vs, \bar\Lambda) = E(\vs).
\end{align*}
Therefore $\vs^k$ is a local minimum point of $E(\vs)$.

Finally, in the above analysis, the equality $F(\vs^k, \Lambda^j) = F(\vs^k, \Lambda^k)$ tells us not only $\Lambda^j$ but also $\Lambda^k$ is optimal for $\vs^k$. Besides, $\vs^k$ is obviously optimal for $\Lambda^k$, from the definition of the algorithm. So $(\vs^k, \Lambda^k)$ is a coordinatewise minimum point of $F(\vs,\Lambda)$.
\end{proof}

\begin{proof}[(Theorem~\ref{thm:lts-cons})]
From Theorem~\ref{thm:lts-conv}, we know that Algorithm~\ref{alg:ilts} finally converges in finite steps. Assume that output is $\vs$, with corresponding $\Lambda\in \{0,1\}^N$. Call $J = \{l:\ \Lambda_l = 1\}$ the ``index set of $\Lambda$'', and $J^*$ the index set of $\Lambda^*$ similarly. Obviously $\vE_{J^*}^* = 0$. Define
\begin{align*}
J_1 = J^*\backslash J,\ J_2 = J\backslash J^*,\ J' = J_1 \cup J_2.
\end{align*}
Since $\vs$ is optimal for $\Lambda$,
\begin{align*}
\vs & = (\vX_J^T \vX_J)^\dag \vX_J^T \vY_J = \vX_J^\dag \vY_J\\
& = \vs^* - \Delta \vs^* + \vX_J^\dag \vE_J^* + \vX_J^\dag \vN_J^*,
\end{align*}
where $\Delta = I_n - \vX_J^\dag \vX_J$. Note that $\vX_J \Delta = 0$, we have
\begin{equation}\label{XJ-delta-s}
\vX_J (\vs - \vs^*) = \vX_J \vX_J^\dag \vE_J^* + \vX_J \vX_J^\dag \vN_J^*.
\end{equation}
Since $\Lambda$ is optimal for $\vs$,
\begin{align*}
\| \Lambda \circ (\vY - \vX \vs) \|_2 \le \| \Lambda^* \circ (\vY - \vX \vs) \|_2.
\end{align*}
Plug in $\vY = \vX\vs^* + \vN^* + \vE^*$ and get
\begin{align*}
\| \vE_J^* + \vN_J^* + \vX_J(\vs^* - \vs)\|_2 \le \| \vE_{J^*}^* + \vN_{J^*}^* + \vX_{J^*}(\vs^* - \vs) \|_2.
\end{align*}
Plugging \eqref{XJ-delta-s} into the left hand side gives us
\begin{align*}
& \| (\vI_{N-K} - \vX_J \vX_J^\dag) (\vE_J^* + \vN_J^*) \|_2\\
\ge\ & \| (\vI_{N-K} - \vX_J \vX_J^\dag) \vE_J^* \|_2 - \| (\vI_{N-K} - \vX_J \vX_J^\dag) \vN_J^* \|_2\\
=\ & \| (\vI_{N-K} - \vX_J \vX_J^\dag) \vE_J^* \|_2\\
& - \sqrt{\vN_J^{*T}(\vI_{N-K} - \vX_J (\vX_J^T \vX_J)^\dag \vX_J^T) \vN_J^*}\\
\ge\ & \| (\vI_{N-K} - \vX_J \vX_J^\dag) \vE_J^* \|_2 - \|\vN^*\|_2,
\end{align*}
while the right hand side is
\begin{align*}
& \| \vE_{J^*}^* + \vN_{J^*}^* + \vX_{J^*}(\vs^* - \vs) \|_2\\
=\ & \| 0 + \vN_{J^*}^* + \vX_{J^*} (\Delta \vs^* - \vX_J^\dag \vE_J^* - \vX_J^\dag \vN_J^*) \|_2\\
\le\ & \| \vN_{J^*}^* - \vX_{J^*} \vX_J^\dag \vN_J^* \|_2 + \| \vX_{J^*} \Delta \vs^* \|_2 + \| \vX_{J^*} \vX_J^\dag \vE_J^* \|_2\\
\le\ & \| \vN_J^* - \vX_J \vX_J^\dag \vN_J^* \|_2 + \| \vN_{J_1}^* - \vX_{J_1} \vX_J^\dag \vN_J^* \|_2\\
& + \| \vX_J \Delta \vs^* \|_2 + \| \vX_{J_1} \Delta \vs^* \|_2 + \| \vX_{J^*} \vX_J^\dag \vE_J^* \|_2\\
\le\ & \| (\vI_{N-K} - \vX_J (\vX_J^T \vX_J)^\dag \vX_J^T) \vN_J^* \|_2\\
& + \|\vN_{J_1}^*\|_2 + \|\vX_{J_1} (\vX_J^T\vX_J^T)^\dag \vX_J^T \vN_J^*\|_2\\
& + 0 + \| \vX_{J_1} \Delta \vs^* \|_2 + \| \vX_{J^*} (\vX_J^T\vX_J)^\dag \vX_J^T \vE_J^* \|_2\\
\le\ & \|\vN_J^*\|_2 + \|\vN_{J_1}^*\|_2 + \mu \|\vN_J^*\|_2 + \eta \|\vs^*\|_2\\
& + \| \vX_{J^*} (\vX_J^T\vX_J)^\dag \vX_J^T \vE_J^* \|_2\\
\le\ & (1 + \mu) \|\vN^*\|_2 + \eta \|\vs^*\|_2 + \| \vX_{J^*} (\vX_J^T\vX_J)^\dag \vX_J^T \vE_J^* \|_2.
\end{align*}
Hence
\begin{align*}
& \| (\vI_{N-K} - \vX_J \vX_J^\dag) \vE_J^* \|_2 - \| \vX_{J^*} (\vX_J^T\vX_J)^\dag \vX_J^T \vE_J^* \|_2\\
\le & (2 + \mu) \|\vN^*\|_2 + \eta \|\vs^*\|_2 = \frac{\sqrt{2}}{2} \epsilon \cdot \vE_{\min}^*.
\end{align*}
The first and second term of the left hand side above are denoted by $A,B$ respectively, then
\begin{align*}
A - B \le \frac{\sqrt{2}}{2} \epsilon \cdot \vE_{\min}^*.
\end{align*}
According to our assumption,
\begin{align*}
\vI_{|J'|} + \vX_{J'} (\vX_J^T \vX_J)^\dag \vX_{J'}^T & \prec (\sqrt{2} - \epsilon)\cdot \vI_{|J'|}\\
& \prec \sqrt{2 - \sqrt{2}\epsilon}\cdot \vI_{|J'|},
\end{align*}
where $U\succ V\Longleftrightarrow V\prec U$ means that $U - V$ is a positive definite matrix. Squaring on both sides lead to
\begin{align*}
& (1 - \sqrt{2}\epsilon)\cdot \vI_{|J'|} - 2 \vX_{J'} (\vX_J^T \vX_J)^\dag \vX_{J'}^T\\
& - \vX_{J'} (\vX_J^T \vX_J)^\dag \vX_{J'}^T \vX_{J'} (\vX_J^T \vX_J)^\dag \vX_{J'}^T \succ 0,
\end{align*}
so its submatrix
\begin{align*}
\mathbf{M} =\ & (1 - \sqrt{2}\epsilon)\cdot \vI_{|J_2|} - 2 \vX_{J_2} (\vX_J^T \vX_J)^\dag \vX_{J_2}^T\\
& - \vX_{J_2} (\vX_J^T \vX_J)^\dag \vX_{J'}^T \vX_{J'} (\vX_J^T \vX_J)^\dag \vX_{J_2}^T
\end{align*}
is also positive definite. If $\supp(\Lambda)\not\subseteq \supp(\Lambda^*)$, i.e. $J\not\subseteq J^*$, then $J_2 = J\backslash J^*$ is not empty, and $\vE_{J_2}^*\neq 0$. Since $J^*\subseteq J\cup J'$ implies $\vX_J^T \vX_J + \vX_{J'}^T \vX_{J'}\succeq \vX_{J^*}^T \vX_{J^*}$, and note that $\vE_{J\cap J^*}^* = 0$, we have
\begin{align*}
& A^2 - B^2\\
=\ & \vE_J^{*T} (\vI_{N-K} - \vX_J (\vX_J^T \vX_J)^\dag \vX_J^T) \vE_J^*\\
& - \vE_J^{*T} (\vX_J (\vX_J^T \vX_J)^\dag (\vX_{J^*}^T \vX_{J^*}) (\vX_J^T \vX_J)^\dag \vX_J^T) \vE_J^*\\
\ge\ &\vE_{J_2}^{*T} (\vI_{|J_2|} - \vX_{J_2} (\vX_J^T \vX_J)^\dag \vX_{J_2}^T\\
& - \vX_{J_2} (\vX_J^T \vX_J)^\dag (\vX_J^T \vX_J + \vX_{J'}^T \vX_{J'}) (\vX_J^T \vX_J)^\dag \vX_{J_2}^T) \vE_{J_2}^*\\
=\ & \vE_{J_2}^{*T} (\mathbf{M} + \sqrt{2}\epsilon \cdot \vI_{|J_2|}) \vE_{J_2}^* > \sqrt{2}\epsilon\cdot \|\vE_{J_2}^*\|_2^2 > 0.
\end{align*}
So $A>B$ and
\begin{align*}
A + B = \frac{A^2 - B^2}{A - B} > \frac{\sqrt{2}\epsilon\cdot \|\vE_{J_2}^*\|_2^2}{\frac{\sqrt{2}}{2}\epsilon \cdot \vE_{\min}^*} \ge 2\|\vE_{J_2}^*\|_2.
\end{align*}
However, from the definition of $A$ and the fact that $A>B$, we have
\begin{align*}
A + B & < 2A\\
& = 2\sqrt{\vE_{J_2}^{*T} (\vI_{|J_2|} - \vX_{J_2} (\vX_J^T \vX_J)^\dag \vX_{J_2}^T) \vE_{J_2}^*} \le 2 \|\vE_{J_2}^*\|_2,
\end{align*}
a contradiction. Hence $\supp(\Lambda)\subseteq \supp(\Lambda^*)$.

When $K = K^*$ additionally, due to the fact that $\|\Lambda^k\|_0 \le N - K = N - K^* = \|\Lambda^*\|_0$, we have $\supp(\Lambda^k) = \supp(\Lambda^*)$.
\end{proof}

\end{document}